\documentclass{article} 
\usepackage{iclr2027_conference,times}

\usepackage{amsmath,amsfonts,bm}

\def\eqref#1{equation~\ref{#1}}

\def\1{\bm{1}}

\DeclareMathAlphabet{\mathsfit}{\encodingdefault}{\sfdefault}{m}{sl}
\SetMathAlphabet{\mathsfit}{bold}{\encodingdefault}{\sfdefault}{bx}{n}

\usepackage{hyperref,wrapfig}
\usepackage{url}
\usepackage{algorithm}
\usepackage{algpseudocode}
\usepackage{amsmath,amsthm}
\usepackage{graphicx}
\usepackage{booktabs}
\usepackage[table]{xcolor} 
\usepackage{grffile} 
\usepackage{multirow}
\usepackage{longtable}
\usepackage{array}
\usepackage{subcaption} 
\newcommand{\vect}[1]{\mathbf{#1}}
\newtheorem{proposition}{Proposition}

\title{EvE: An Alternate Optimizer to Adam}

\author{Shashank Raj \\
Department of Computer Science \\
The University of Alabama \\
Tuscaloosa, AL 35487, USA \\
\texttt{sraj2@crimson.ua.edu} \\
\And
Kalyanmoy Deb \\
Department of Computer Science \\
The University of Alabama \\
Tuscaloosa, AL 35487, USA \\
\texttt{kdeb@ua.edu}
}

\iclrfinalcopy 
\begin{document}

\maketitle

\begin{abstract}
Adam and its variants dominate neural network training, but a single training run only reveals whether one configuration works well after most of its budget is already spent. The concept is a poor fit for hyperparameter or architecture search, where many configurations must be ranked cheaply and pruned early. As an alternative to Adam, we introduce EvE (Evolutionary Explorer), a steady-state, population-of-four differential evolution (DE) optimizer with a targeted Adam fallback: each iteration proposes one candidate via the DE operator. EvE only runs a short burst of gradient descent on it if the DE step fails to improve on the incumbent. Selection is greedy, so on a deterministic objective the best-so-far value is provably monotone non-increasing, and because gradients are used only as a targeted rescue, per-iteration cost stays within a constant factor of a single Adam step regardless of problem dimension. Under a fixed, evaluation-cost-matched budget, EvE wins or ties Adam on $76\%$ of $70$ (problem, dimension) cells across seven classical scalable benchmarks scaling up to one million variables. On three real neural-network training tasks (an MLP on MNIST, and LoRA fine-tuning of a 1.5B-parameter language model on two datasets) EvE consistently finishes the same charged budget $1.7$ to $3.9\times$ faster in wall-clock time, at a modest cost in final quality (about one accuracy point on MNIST, roughly $9$--$11\%$ higher relative test loss on the two language fine-tuning tasks; on GSM8K's full test set, Adam is also about $5$ accuracy points more accurate at matched budgets, and fine-tuning lowers accuracy below the base model for both optimizers). Inside successive halving on UCI Adult Income, EvE completes hyperparameter and architecture searches $3.1$--$3.5\times$ faster, and its ranking of candidate configurations agrees with Adam's about as well as Adam's agrees with itself across training seeds (Kendall's $\tau$ of $0.66$--$0.69$). These results position EvE not as a total replacement for Adam as a final-stage trainer, but as a fast, gradient-aware proxy for the search-heavy, budget-constrained regime one level up: ranking candidate configurations, or pruning clearly bad ones early, where a fast, approximate signal is worth more than a slow, precise one.
\end{abstract}

\vspace{-2mm}
\section{Introduction}
\label{sec:intro}
\vspace{-2mm}

Gradient descent and its adaptive variants are the default engine of deep learning. From stochastic gradient descent \citep{robbins1951stochastic} and momentum \citep{sutskever2013importance} to AdaGrad \citep{duchi2011adaptive}, RMSProp \citep{tieleman2012rmsprop}, Adadelta \citep{zeiler2012adadelta}, and Adam \citep{kingma2015adam}, these methods succeed because they are cheap: one backward pass and a few scalar moment estimates suffice to adapt a per-coordinate step. This comes with two costs. First, first-order methods see only the local slope, so on landscapes with many local optima, saddles, or plateaus, gradients alone need not lead to a good solution \citep{abdulkadirov2023survey}, which is why the field keeps producing variants that reshape the moment estimates \citep{loshchilov2019decoupled,liu2020radam,dozat2016nadam,dubey2020diffgrad,zaheer2018adaptive,zhuang2020adabelief,luo2019adabound,xie2021positive,xie2022adan}, add curvature \citep{liu1989limited,ma2021apollo,yao2021adahessian}, or use information geometry \citep{martens2020new,nielsen2020elementary}, all still refining a single point from local, differentiable information. Second, Adam typically needs many epochs before a training curve is informative, acceptable for one converged model but a poor fit for ranking many candidate configurations cheaply, as in hyperparameter and architecture search, where poor configurations should be pruned early \citep{bergstra2011algorithms,elsken2019neural}. Population-based, gradient-free methods such as differential evolution \citep{storn1997differential}, evolution strategies \citep{hansen2001completely,salimans2017evolution}, and particle swarm optimization \citep{kennedy1995particle} explore several regions at once without a differentiability requirement, but waste evaluations rediscovering directions a single backward pass would reveal.

This tension motivates a hybrid, not a replacement. EvE is built around a steady-state population of four individuals, each updated by a differential evolution step; when this fails to improve an individual, EvE runs a short burst of Adam on it alone, using its gradient at that point (a cheap, gated polynomial mutation is also applied before the DE child is checked on the synthetic benchmarks, but not for neural network training; Section~\ref{sec:mutation}). Selection is greedy, so the best-so-far value is monotonically non-increasing by construction (Section~\ref{sec:theory}), and because gradients are only a targeted rescue, per-iteration cost stays within a constant factor of a single Adam optimizer regardless of dimension. Under this budget, EvE wins or matches Adam on most of the classical scalable benchmarks tested, up to $10^6$ variables, and finishes real neural-network training tasks $1.7$--$3.9\times$ faster at a modest cost in final quality (Section~\ref{sec:experiments}). That trade-off is this paper's central finding: a fast, gradient-aware proxy for the search-heavy step before training a single model, not a replacement for Adam as a final-stage trainer.

\vspace{-2mm}
\section{Related Work}
\label{sec:related}
\vspace{-2mm}

\paragraph{Gradient-based optimization.} Most neural network training uses first-order stochastic optimization with adaptive per-coordinate step sizes. AdaGrad \citep{duchi2011adaptive} accumulates squared gradients, RMSProp \citep{tieleman2012rmsprop} and Adadelta \citep{zeiler2012adadelta} replace that with an exponential moving average, and Adam \citep{kingma2015adam} combines bias-corrected first and second moments. Later work targets specific failure modes of Adam: decoupled weight decay \citep{loshchilov2019decoupled}, early-training variance rectification \citep{liu2020radam}, Nesterov-style lookahead \citep{dozat2016nadam}, reshaped or bounded second moments \citep{dubey2020diffgrad,zaheer2018adaptive,zhuang2020adabelief,luo2019adabound}, and extensions to the moment structure itself \citep{xie2021positive,xie2022adan}, alongside quasi-Newton \citep{liu1989limited,ma2021apollo,yao2021adahessian} and information-geometric \citep{martens2020new,nielsen2020elementary} methods (surveyed across architecture families by \citet{abdulkadirov2023survey}). All of these methods refine a single point with local differentiable information, the constraint EvE relaxes.

\paragraph{Evolutionary and population-based optimization.} Differential evolution \citep{storn1997differential} generates candidates from scaled differences between population members, followed by crossover and greedy selection. Evolution strategies \citep{hansen2001completely,salimans2017evolution} adapt a search distribution over generations, and particle swarm optimization \citep{kennedy1995particle} pulls each particle toward personal and global bests. Real-coded genetic algorithms contribute the polynomial mutation operator of \citet{deb1996combined}, standardized in NSGA-II \citep{deb2002fast}, which EvE reuses. These methods need no gradient and resist collapsing into the first local optimum, but sample efficiency is poor as dimension grows \citep{such2017deep}, so they are used mostly for hyperparameter and architecture search \citep{bergstra2011algorithms,elsken2019neural} rather than training large networks directly, though they have proven effective well beyond training itself \citep{raj2026evosort,raj2026hybmsearch,deb2026directly,guha2026regularity,raj2026balance}.

\paragraph{Hybrid methods.} Prior hybrids typically use a gradient-free method to explore globally and a gradient step as a separate polishing phase on the whole population or its best member \citep{he2022hybrid,abdulkadirov2023survey}. EvE couples the two more tightly: the Adam fallback is a per-individual, per-iteration rescue triggered only when mutation fails, so gradient cost scales with how often mutation stalls, not with population size.

\vspace{-2mm}
\section{Method}
\label{sec:method}
\vspace{-2mm}

Polynomial mutation is used on the low-dimensional synthetic benchmarks of Section~\ref{sec:experiments} but off by default for neural network training; Section~\ref{sec:mutation} explains why.

\subsection{Setup}
\label{sec:setup}

We minimize $f:\mathbb{R}^{n}\to\mathbb{R}$ over a box $[\vect{x}_l,\vect{x}_u]$, with $f$ possibly non-convex and not required to be differentiable everywhere. All internal search happens on the unit cube $z\in[0,1]^n$ via the affine map $x=\vect{x}_l+z\odot(\vect{x}_u-\vect{x}_l)$, so a fixed-size step in $z$ never overshoots narrow-range coordinates or under-moves wide-range ones. The population at iteration $t$ is $Z^{(t)}=\{z_1,\dots,z_4\}$ with values $f_i^{(t)}=f(\vect{x}_l+z_i^{(t)}\odot(\vect{x}_u-\vect{x}_l))$; we fix $n_{\text{pop}}=4$, keeping a few competing candidates open for the Adam fallback to polish, not to cover the space.

\subsection{Proximity initialization}
\label{sec:init}

Uniform initialization of four points gives difference vectors $z_b-z_c$ that are themselves near-uniform, carrying no local information. Instead we draw one anchor $z_1\sim\mathcal{U}(0,1)^n$ and place the other three members as independent Gaussian jitters around it,
\begin{equation}
\label{eq:proximity}
z_k=\text{clamp}\big(z_1+\sigma\,\epsilon_k,\,0,1\big),\qquad \epsilon_k\sim\mathcal{N}(0,I_n),\qquad k=2,3,4,
\end{equation}
with $\sigma=0.1$, so the population is a compact neighborhood from generation one and DE differences and the pbest pull (Section~\ref{sec:de}) are informative immediately. This is the default for neural network training (Section~\ref{sec:mnist}); the synthetic benchmark suite instead uses Latin Hypercube Sampling, spreading all four points across $[0,1]^n$ at once rather than clustering them (Appendix~\ref{app:benchmarks} reports the comparison). 

\subsection{Steady-state differential evolution}
\label{sec:de}

Each iteration challenges one slot $i$ with one offspring, matching the Adam fallback's granularity and keeping per-iteration cost independent of $n_{\text{pop}}$; one generation is $n_{\text{pop}}$ iterations. Drawing three distinct donors $a,b,c\neq i$ uniformly from the other three slots, we use a rand-to-pbest/1 DE update \citep{storn1997differential}:
\begin{equation}
\label{eq:de}
z_{\text{child}}=z_a+F_t\,(z_b-z_c)+F\,(z_{\text{best}}-z_a),\qquad z_{\text{best}}=\arg\min_{z\in Z^{(t)}}f(z),
\end{equation}
with base weight $F=0.5$ and a per-iteration multiplicative jitter $F_t=F\,(1+\gamma(u-0.5))$, $u\sim\mathcal{U}(0,1)$, $\gamma=10^{-4}$, applied only to the difference term (the pbest pull uses the unjittered $F$), decorrelating the difference-vector scale across iterations without changing its mean. The pbest term, absent from the original rand/1 scheme, biases every offspring toward the best known point immediately, since three donors alone need many iterations to find an improving direction. The offspring is clamped to $[0,1]^n$; slot $i$ only enters later, checked against $f_i^{(t)}$ (Section~\ref{sec:selection}), never in construction.

\subsection{Polynomial mutation (PM)}
\label{sec:mutation}

On the synthetic benchmarks of Section~\ref{sec:experiments} the DE offspring is perturbed with the polynomial mutation (PM) operator of \citet{deb1996combined,deb2002fast}, guarded by its own accept/reject gate: PM produces a mutant of $z_{\text{child}}$, both are scored on the same batch, and $z_{\text{child}}$ is replaced by the mutant only if the mutant is strictly better; otherwise the clean, unmutated child is kept. This local gate means PM can only ever leave $z_{\text{child}}$ unchanged or improve it, at the cost of one extra evaluation, before Section~\ref{sec:adam-fallback}'s Adam fallback and Section~\ref{sec:selection}'s selection ever see it. For a selected gene $z_d$ with bounds $[z_l,z_u]_d$, PM draws $r\sim\mathcal{U}(0,1)$ and sets $z_d\gets z_d+\delta_q\,(z_u-z_l)_d$ with
\begin{equation}
\label{eq:pm}
\delta_q=
\begin{cases}
\big[\,2r+(1-2r)(1-\delta_1)^{\eta+1}\,\big]^{\frac{1}{\eta+1}}-1, & r\le 0.5,\\[3pt]
1-\big[\,2(1-r)+2(r-0.5)(1-\delta_2)^{\eta+1}\,\big]^{\frac{1}{\eta+1}}, & r>0.5,
\end{cases}
\end{equation}
where $\delta_1=(z_d-z_l)/(z_u-z_l)$, $\delta_2=(z_u-z_d)/(z_u-z_l)$, and $\eta=20$ is the distribution index: larger $\eta$ concentrates $\delta_q$ near $0$ (parent, $z_{\text{child}}$). PM's perturbation is bounded and self-scaling to the distance to each boundary, needing no clipping and wasting no probability on infeasible values -- why NSGA-II adopts it. Each individual mutates $k=\max\!\big(1,\text{round}(n\cdot p)\big)=1$ gene, the classical per-gene rate $p=1/n$: exactly one coordinate per individual regardless of $n$, worth doing only because the accept/reject gate above makes the attempt risk-free at the cost of one extra evaluation.

\paragraph{Why PM helps here but not on neural network training?} We turn PM off by default when training a real network: the gate means it can never hurt, but at millions of parameters a single-coordinate nudge essentially never wins it, so it becomes a pure cost with no measurable benefit. Appendix~\ref{app:pm} gives the full argument.

\subsection{Adam fallback}
\label{sec:adam-fallback}

A failed candidate should not waste its evaluation: when the (possibly PM's) child fails to beat the target slot's value, $f(z_{\text{child}})\ge f_i^{(t)}$, EvE runs $n_{\text{adam}}$ Adam \citep{kingma2015adam} steps on it, in real coordinates $x=\vect{x}_l+z_{\text{child}}\odot(\vect{x}_u-\vect{x}_l)$, not the normalized $z$-space the rest of the method uses: with $g_t=\nabla_x f(x_{t-1})+\lambda x_{t-1}$:
\begin{align}
m_t&=\beta_1 m_{t-1}+(1-\beta_1)g_t, & v_t&=\beta_2 v_{t-1}+(1-\beta_2)g_t^2,\\
\hat m_t&=m_t/(1-\beta_1^{t}), & \hat v_t&=v_t/(1-\beta_2^{t}),\\
x_t&=x_{t-1}-\eta_{\text{lr}}\,\hat m_t\big/(\sqrt{\hat v_t}+\epsilon), \quad x_t \gets\text{clamp}(x_t,\vect{x}_l,\vect{x}_u), \label{eq:adamw}
\end{align}
with default $\beta_1=0.9$, $\beta_2=0.999$, $\epsilon=10^{-8}$; weight decay $\lambda$ enters Equation~\ref{eq:adamw} as a classic L2 penalty folded directly into the gradient before the moment update (PyTorch's stock \texttt{Adam}), not AdamW's decoupled rule \citep{loshchilov2019decoupled}, and the result is clamped back into the real bounds. Each population slot keeps its own moment state $(m,v,t)$ across the whole run, so fallback bursts form one continuous Adam trajectory rather than repeated cold starts; only after the burst is $x_{n_{\text{adam}}}$ mapped back to normalized coordinates. We use Adam rather than SGD or L-BFGS \citep{liu1989limited} since the fallback runs on a stochastic minibatch objective, where per-coordinate second-moment normalization handles gradient noise across differently scaled weight tensors. The refined point replaces $z_{\text{child}}$ only if it beats $f(z_{\text{child}})$, so the fallback never hurts a slot (Section~\ref{sec:monotone}). The fallback slot is not tied to Adam specifically: any local, gradient-based optimizer could occupy it under the same accept-only-if-improving rule; we use Adam throughout and do not evaluate other choices.

\subsection{Selection and common random numbers}
\label{sec:selection}

The winner of the iteration, $z_{\text{cand}}$, is $z_{\text{child}}$ (after PM, if active) unless the Adam fallback fired and beat it, in which case $z_{\text{cand}}$ is the Adam-refined point. $z_{\text{cand}}$ replaces slot $i$ iff it strictly beats $f_i^{(t)}$; otherwise the slot is untouched.

\paragraph{How the batch advances.} When $f$ is a minibatch loss, one new minibatch $\mathcal{B}_t$ is drawn once per iteration; every evaluation that iteration, including the whole fallback burst, is scored on it (common random numbers), so the greedy comparison is not confounded by sampling noise. Appendix~\ref{app:batch} gives full details.
Algorithm~\ref{alg:spade} gives the complete steady-state loop.
\begin{algorithm}[t]
\caption{EvE}
\label{alg:spade}
\scalebox{0.65}{%
\begin{minipage}{\linewidth}
\begin{algorithmic}[1]
\Require $f$, bounds $\vect{x}_l,\vect{x}_u$, iterations $T=4n_{\text{gen}}$, $F$, $\sigma$, $\eta$, $n_{\text{adam}}$, $\eta_{\text{lr}}$, $\lambda$, seed $s$, \textsc{usePM}
\State $z_1\sim\mathcal{U}(0,1)^n$;\quad $z_k\gets\text{clamp}(z_1+\sigma\,\epsilon_k,0,1)$, $\epsilon_k\sim\mathcal{N}(0,I_n)$, for $k=2,3,4$
\State $f_k\gets f(\vect{x}_l+z_k\odot(\vect{x}_u-\vect{x}_l))$ for $k=1,\dots,4$
\State $\text{state}_k\gets(\vect{0},\vect{0},0)$ for $k=1,\dots,4$ \Comment{persistent Adam moments, one per slot}
\For{$t=1,\dots,T$}
    \State $i\gets ((t-1)\bmod 4)+1$ \Comment{round-robin target slot}
    \State \textbf{if} $f$ stochastic \textbf{then} draw new minibatch $\mathcal{B}_t$; $f_i\gets f(x_i;\mathcal{B}_t)$
    \State draw distinct $a,b,c\in\{1,\dots,4\}\setminus\{i\}$; draw $u\sim\mathcal{U}(0,1)$; $F_t\gets F(1+\gamma(u-0.5))$
    \State $z_{\text{child}}\gets\text{clamp}\big(z_a+F_t(z_b-z_c)+F(z_{\arg\min_k f_k}-z_a),\,0,1\big)$
    \State $f_{\text{child}}\gets f(z_{\text{child}};\mathcal{B}_t)$
    \If{\textsc{usePM}} \Comment{PM's own accept/reject gate: mutant only kept if it wins}
        \State $z_{\text{mut}}\gets\textsc{PolynomialMutation}(z_{\text{child}},\eta)$;\quad $f_{\text{mut}}\gets f(z_{\text{mut}};\mathcal{B}_t)$
        \State \textbf{if} $f_{\text{mut}} < f_{\text{child}}$ \textbf{then} $z_{\text{child}},f_{\text{child}}\gets z_{\text{mut}},f_{\text{mut}}$
    \EndIf
    \If{$f_{\text{child}} < f_i$}
        \State $z_{\text{cand}},f_{\text{cand}}\gets z_{\text{child}},f_{\text{child}}$
    \Else
        \State $x\gets \vect{x}_l+z_{\text{child}}\odot(\vect{x}_u-\vect{x}_l)$
        \For{$n=1,\dots,n_{\text{adam}}$}
            \State $x\gets \textsc{AdamStep}\big(x,\nabla_x f(x;\mathcal{B}_t),\eta_{\text{lr}},\lambda,\text{state}_i\big)$;\quad $x\gets\text{clamp}(x,\vect{x}_l,\vect{x}_u)$
        \EndFor
        \State $z_{\text{adam}}\gets(x-\vect{x}_l)/(\vect{x}_u-\vect{x}_l)$;\quad $f_{\text{adam}}\gets f(z_{\text{adam}};\mathcal{B}_t)$
        \State $(z_{\text{cand}},f_{\text{cand}})\gets\arg\min_{(z,f)\in\{(z_{\text{child}},f_{\text{child}}),(z_{\text{adam}},f_{\text{adam}})\}} f$
    \EndIf
    \State \textbf{if} $f_{\text{cand}} < f_i$ \textbf{then} $z_i\gets z_{\text{cand}}$; $f_i\gets f_{\text{cand}}$
\EndFor
\State \Return $\arg\min_k z_k,\ \min_k f_k$
\end{algorithmic}
\end{minipage}}
\end{algorithm}
Appendix~\ref{app:algo} gives implementation and memory-cost details, the full batch-advance discussion, and the extended argument for why PM helps the benchmarks but not neural network training.

\vspace{-2mm}
\section{Theoretical Properties}
\label{sec:theory}
\vspace{-1mm}

We examine two theoretical properties of EvE: that its best-so-far objective value never worsens across iterations, and that its per-iteration cost stays within a constant factor of a single Adam step regardless of problem dimension $n$.

\paragraph{Monotone best-so-far.}
\label{sec:monotone}
Let $f_{\text{best}}^{(t)}=\min_i f_i^{(t)}$.
\begin{proposition}
For deterministic $f$, $f_{\text{best}}^{(t+1)}\le f_{\text{best}}^{(t)}$\ for every $t$.
\end{proposition}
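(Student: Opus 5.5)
The plan is to argue directly from the selection rule in Algorithm~\ref{alg:spade}, tracking exactly which stored values $f_k$ can change during iteration $t$. For deterministic $f$ the re-evaluation line ``$f_i\gets f(x_i;\mathcal{B}_t)$'' is skipped. Each stored $f_k^{(t)}$ is therefore exactly $f$ at the stored point $z_k^{(t)}$, and it changes only through the final assignment. The first step is to note that at most one slot, the round-robin target $i$, can be modified in iteration $t$. All other slots satisfy $z_k^{(t+1)}=z_k^{(t)}$ and $f_k^{(t+1)}=f_k^{(t)}$.

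The second step is a case split on the final test $f_{\text{cand}}<f_i^{(t)}$. If it fails, nothing changes, so $f_{\text{best}}^{(t+1)}=f_{\text{best}}^{(t)}$. If it succeeds, slot $i$ is overwritten with $f_i^{(t+1)}=f_{\text{cand}}<f_i^{(t)}$. Every entry of the new value vector is then at most the corresponding old entry, and taking the minimum over $k$ preserves this entrywise inequality, so $\min_k f_k^{(t+1)}\le \min_k f_k^{(t)}$.

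It is worth remarking that the inner PM gate and the Adam-fallback $\arg\min$ are never needed for the inequality itself. Whatever candidate they produce, it is admitted only through the strict-improvement test against $f_i^{(t)}$. The intermediate gates only ensure that $f_{\text{cand}}$ is the value of $f$ at $z_{\text{cand}}$, since every comparison in the iteration uses the same deterministic $f$. The clamping steps do not affect the argument either: they only change which point is evaluated, not the fact that the stored value is its true objective value.

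The only real subtlety, and the step where care is needed, is the consistency of stored values with actual objective values. For deterministic $f$ this holds by induction: it holds at initialization, and each update sets $f_i$ to the freshly computed $f(z_{\text{cand}})$. This is exactly where determinism is used. In the stochastic case, the re-scoring of $f_i$ on $\mathcal{B}_t$ can raise the stored value, so the proposition as stated does not extend, and the proof will say so explicitly.
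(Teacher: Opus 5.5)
Your proof is correct and is the same argument the paper gives: only the round-robin target slot can change, and greedy selection replaces $f_i$ only by a strictly smaller value, so $\min_k f_k$ cannot increase. Your extra bookkeeping — that stored values equal true objective values because the refresh in line 6 is skipped for deterministic $f$ — spells out what the paper leaves to its remark after the proof on the stochastic case.
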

\begin{proof}
Iteration $t{+}1$ touches only slot $i$, and greedy selection gives $f_i^{(t+1)}=\min(f_i^{(t)},f(z_{\text{cand}}))\le f_i^{(t)}$; all other slots are unchanged. Hence $\min_j f_j^{(t+1)}\le\min_j f_j^{(t)}$.
\end{proof}
This holds for any $F,\sigma,\eta,n_{\text{adam}}$ and whether or not the fallback fires, since DE, PM, and Adam only ever propose a candidate that is checked against the incumbent before acceptance. For a stochastic minibatch $f$, the guarantee is weaker: Algorithm~\ref{alg:spade} (line 6) refreshes the challenged slot on a freshly drawn batch before comparison, so $f_i$ can move from sampling noise alone; what is guaranteed is only that a candidate is never accepted unless it beats the incumbent on the same batch (common random numbers, Section~\ref{sec:selection}), so Section~\ref{sec:experiments}'s training curves are not literally non-increasing step to step.

\paragraph{Per-iteration cost.}
Let $C_f$ be the cost of one forward pass and assume one gradient costs $c_\nabla C_f$ with $c_\nabla=O(1)$ (reverse-mode AD). One iteration does $O(n)$ elementwise work for DE, plus a constant number of forward evaluations and, when the fallback fires, $n_{\text{adam}}$ forward-backward passes. The worst case (fallback every iteration) is $O(n)+\big(3+n_{\text{adam}}(1+c_\nabla)\big)C_f=O(n_{\text{adam}}C_f)$ whenever $C_f=\Omega(n)$, the same order as $n_{\text{adam}}$ plain Adam steps. Because $n_{\text{pop}}=4$ is constant, EvE's cost stays within a constant factor of a single Adam optimizer at any dimension, which is what lets Section~\ref{sec:experiments} reach $n=10^6$ without a larger budget.

\paragraph{Memory cost.} EvE trades a constant, dimension-independent memory overhead for its population and persistent momentum: $13\times$ the live parameter count in total storage against a plain Adam optimizer's $4\times$, a $\sim3$--$4\times$ multiplier that is fixed and does not grow with problem size. Appendix~\ref{app:algo} gives the full derivation and a measured correction to this theoretical count.

\paragraph{Reproducibility.} Every stochastic draw comes from one of two persistent generators seeded once from a single integer $s$: a device generator used only at initialization (Section~\ref{sec:init}), and a CPU generator drawn from continuously for every donor selection, jitter, and mutation seed thereafter. Neither is reseeded mid-run, so a run is exactly reproducible from $s$.

\vspace{-2mm}
\section{Experiments}
\label{sec:experiments}
\vspace{-2mm}

\subsection{Experimental Setup}
\label{sec:setup-exp}

Every comparison in this section holds EvE and Adam to the same evaluation-cost budget $B$, so any gap in final quality reflects the optimizer, not a compute advantage. We budget by function evaluations, the native currency of evolutionary search, but a gradient is not free: Adam takes one every evaluation while EvE's fallback only takes one when a candidate needs rescuing (Section~\ref{sec:adam-fallback}), so we convert every backward pass into evaluation-equivalent units from its own measured wall time, never an assumed multiplier (Appendix~\ref{app:eval-cost} gives the exact rule). A run stops the instant its cumulative cost reaches $B$; we use $B=2{,}000$ on the synthetic benchmarks, $3{,}000$ on MLP/MNIST, and $300$ on LoRA/Alpaca (raised $3\times$ to $900$ on GSM8K, a deliberate and stated deviation; Appendix~\ref{app:gsm8k}). Where a method is reported at its own best learning rate, that rate is chosen by the mean held-out test metric over seeds, the same rule for every method; Appendix~\ref{app:lr-selection} reports what changes if it is chosen on validation data instead.

\vspace{-2mm}
\subsection{Synthetic Scalable Benchmarks}
\label{sec:benchmarks}

We evaluate on seven classical scalable benchmark problems (Ackley, Griewank, Rastrigin, Rosenbrock, Schwefel, Sphere, Zakharov), each implemented from its standard closed-form definition so that both EvE and Adam optimize the identical objective, swept over $n\in\{2,10,20,50,100,200,500,1{,}000,10^5,10^6\}$ at 21 independent seeds each, under the shared budget $B=2{,}000$ of Section~\ref{sec:setup-exp} (Sphere is recentered to keep its optimum interior to the search box rather than on its boundary; Appendix~\ref{app:benchmarks} explains why). Table~\ref{tab:scalable-benchmarks} gives the final objective value (lower is better) at every (problem, $n$) cell, with a one-sided Wilcoxon signed-rank test (paired by seed, Holm-Bonferroni corrected across all cells at family-wise $\alpha=0.05$) marking whichever side, if either, is significantly better; Table~\ref{tab:scalable-summary} (Appendix~\ref{app:benchmarks}) counts the wins, ties and losses per problem.

\begin{table}[t]
\centering
\caption{EvE vs.\ Adam on the scalable synthetic benchmarks, at fixed
evaluation-cost budget (Section~\ref{sec:experiments}). Lower is better.
Min/Med/Max over 21 seeds per (problem, $n$) pair. A shaded, bold median is
the significantly lower (better) side (one-sided Wilcoxon signed-rank,
Holm-Bonferroni corrected, family-wise $\alpha=0.05$); shaded, bold, italic
on both sides marks no significant difference.}
\label{tab:scalable-benchmarks}
\scalebox{0.6}{%
\begin{tabular}{|l|c|c|c|c|c|c|c|c|}
\hline
\multirow{2}{*}{Problem} & \multirow{2}{*}{Visualization} & \multirow{2}{*}{$n$} & \multicolumn{3}{c|}{EvE} & \multicolumn{3}{c|}{Adam} \\
\cline{4-9}
 & & & Min & Med & Max & Min & Med & Max \\
\hline
\multirow{10}{*}{Ackley~\citep{ackley1987connectionist}} & \multirow{10}{*}{\includegraphics[width=2.2cm]{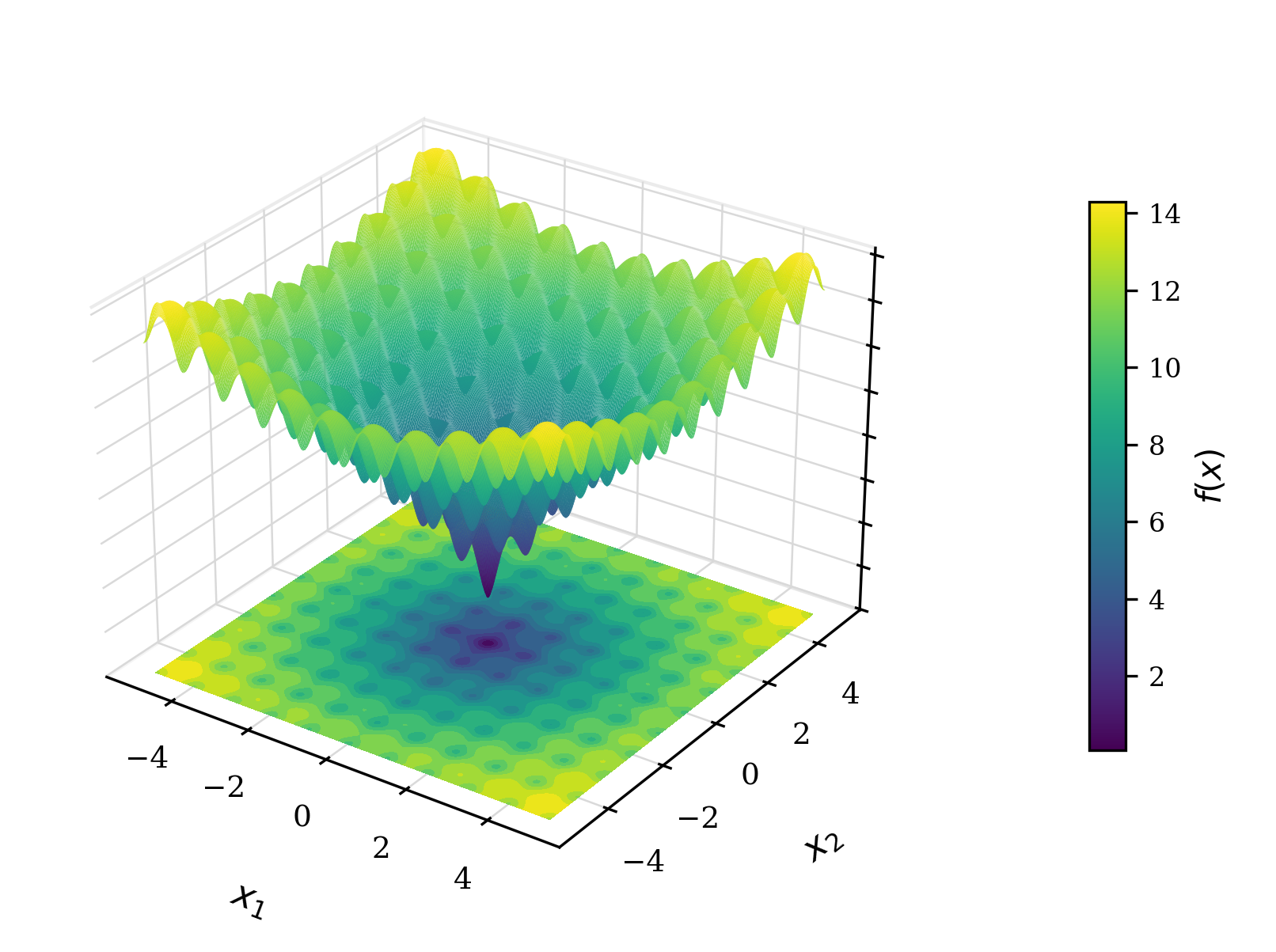}} & 2 & 1.24e-5 & \cellcolor{gray!25}\textbf{3.00e-4} & 2.58e0 & 6.88e0 & 1.88e1 & 1.99e1 \\
\cline{3-9}
 &  & 10 & 2.58e0 & \cellcolor{gray!25}\textbf{1.58e1} & 1.98e1 & 1.86e1 & 1.95e1 & 1.99e1 \\
\cline{3-9}
 &  & 20 & 4.95e0 & \cellcolor{gray!25}\textbf{1.55e1} & 2.01e1 & 1.86e1 & 1.94e1 & 1.98e1 \\
\cline{3-9}
 &  & 50 & 4.82e0 & \cellcolor{gray!25}\textbf{\textit{1.80e1}} & 2.04e1 & 1.91e1 & \cellcolor{gray!25}\textbf{\textit{1.95e1}} & 1.98e1 \\
\cline{3-9}
 &  & 100 & 9.54e0 & \cellcolor{gray!25}\textbf{1.65e1} & 2.03e1 & 1.94e1 & 1.96e1 & 1.97e1 \\
\cline{3-9}
 &  & 200 & 1.21e1 & \cellcolor{gray!25}\textbf{1.69e1} & 2.05e1 & 1.94e1 & 1.96e1 & 1.97e1 \\
\cline{3-9}
 &  & 500 & 6.68e0 & \cellcolor{gray!25}\textbf{\textit{1.80e1}} & 2.04e1 & 1.95e1 & \cellcolor{gray!25}\textbf{\textit{1.96e1}} & 1.96e1 \\
\cline{3-9}
 &  & 1{,}000 & 6.79e0 & \cellcolor{gray!25}\textbf{\textit{1.82e1}} & 2.04e1 & 1.95e1 & \cellcolor{gray!25}\textbf{\textit{1.96e1}} & 1.96e1 \\
\cline{3-9}
 &  & 100{,}000 & 9.72e-4 & \cellcolor{gray!25}\textbf{4.40e0} & 1.03e1 & 1.96e1 & 1.96e1 & 1.96e1 \\
\cline{3-9}
 &  & 1{,}000{,}000 & 5.69e-4 & \cellcolor{gray!25}\textbf{6.68e0} & 1.07e1 & 1.96e1 & 1.96e1 & 1.96e1 \\
\hline
\multirow{10}{*}{Griewank~\citep{griewank1981generalized}} & \multirow{10}{*}{\includegraphics[width=2.2cm]{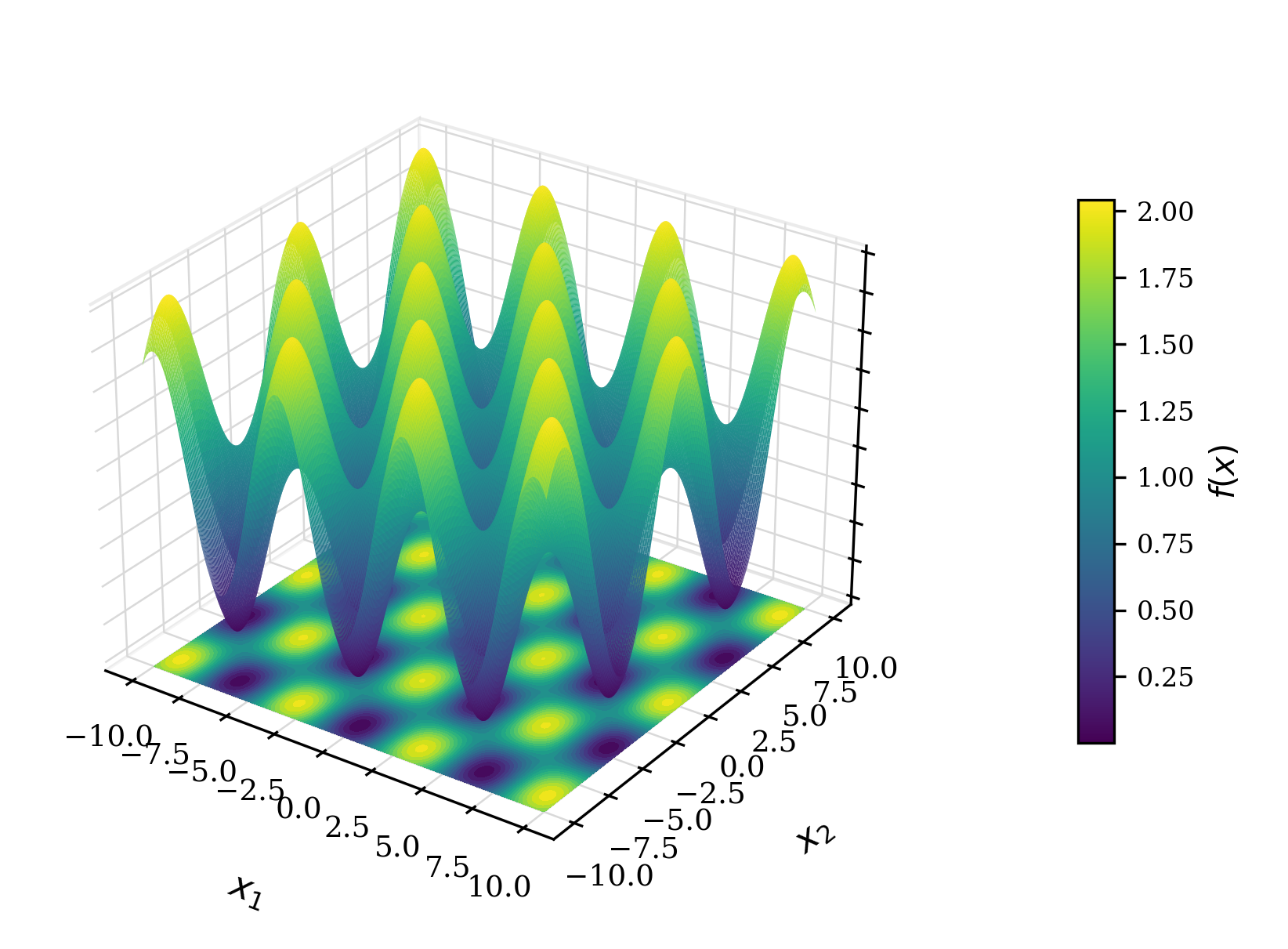}} & 2 & 9.57e-3 & \cellcolor{gray!25}\textbf{4.61e-2} & 2.83e-1 & 5.99e-1 & 3.28e1 & 1.45e2 \\
\cline{3-9}
 &  & 10 & 4.30e-2 & \cellcolor{gray!25}\textbf{8.97e-1} & 1.68e0 & 9.64e1 & 2.01e2 & 3.93e2 \\
\cline{3-9}
 &  & 20 & 1.33e0 & \cellcolor{gray!25}\textbf{3.15e0} & 3.57e1 & 1.98e2 & 3.70e2 & 6.23e2 \\
\cline{3-9}
 &  & 50 & 6.11e0 & \cellcolor{gray!25}\textbf{6.89e1} & 1.83e2 & 6.91e2 & 9.90e2 & 1.39e3 \\
\cline{3-9}
 &  & 100 & 1.63e2 & \cellcolor{gray!25}\textbf{2.36e2} & 7.13e2 & 1.74e3 & 2.16e3 & 2.53e3 \\
\cline{3-9}
 &  & 200 & 4.37e1 & \cellcolor{gray!25}\textbf{5.91e2} & 1.79e3 & 3.60e3 & 4.23e3 & 4.70e3 \\
\cline{3-9}
 &  & 500 & 1.43e3 & \cellcolor{gray!25}\textbf{1.91e3} & 4.94e3 & 9.65e3 & 1.04e4 & 1.16e4 \\
\cline{3-9}
 &  & 1{,}000 & 9.23e2 & \cellcolor{gray!25}\textbf{4.45e3} & 9.87e3 & 1.97e4 & 2.11e4 & 2.29e4 \\
\cline{3-9}
 &  & 100{,}000 & 2.67e4 & \cellcolor{gray!25}\textbf{7.79e4} & 9.15e5 & 1.96e6 & 2.00e6 & 2.05e6 \\
\cline{3-9}
 &  & 1{,}000{,}000 & 2.72e5 & \cellcolor{gray!25}\textbf{3.71e6} & 9.05e6 & 1.57e7 & 1.60e7 & 1.66e7 \\
\hline
\multirow{10}{*}{Rastrigin~\citep{rastrigin1974systems}} & \multirow{10}{*}{\includegraphics[width=2.2cm]{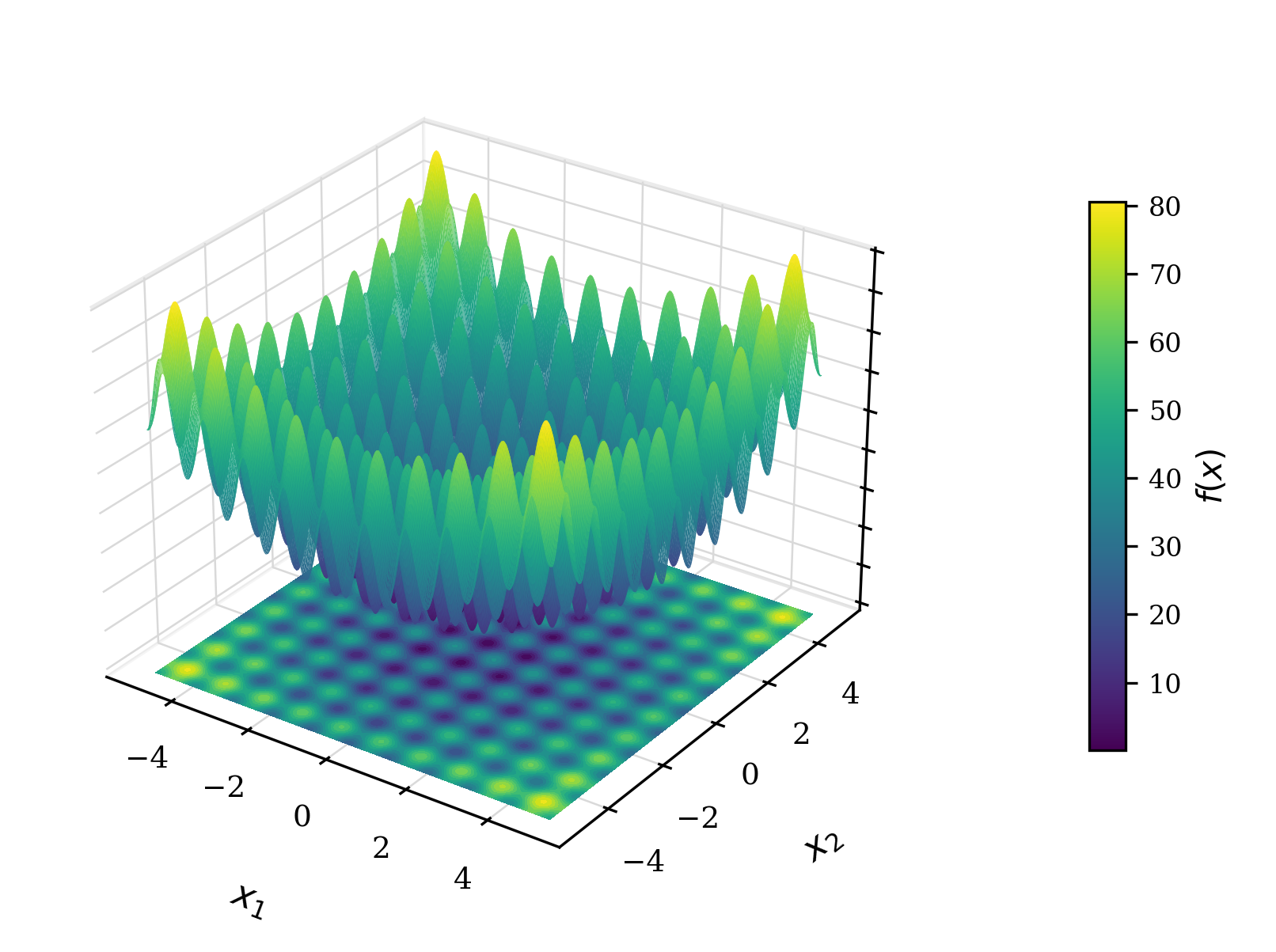}} & 2 & 0 & \cellcolor{gray!25}\textbf{9.95e-1} & 5.44e0 & 0 & 8.95e0 & 4.08e1 \\
\cline{3-9}
 &  & 10 & 1.42e1 & \cellcolor{gray!25}\textbf{3.51e1} & 7.06e1 & 4.18e1 & 7.96e1 & 1.44e2 \\
\cline{3-9}
 &  & 20 & 2.95e1 & \cellcolor{gray!25}\textbf{\textit{9.95e1}} & 2.20e2 & 8.86e1 & \cellcolor{gray!25}\textbf{\textit{1.50e2}} & 2.39e2 \\
\cline{3-9}
 &  & 50 & 2.39e1 & \cellcolor{gray!25}\textbf{1.82e2} & 5.06e2 & 2.82e2 & 3.88e2 & 5.25e2 \\
\cline{3-9}
 &  & 100 & 1.46e2 & \cellcolor{gray!25}\textbf{6.08e2} & 1.11e3 & 7.00e2 & 8.58e2 & 9.82e2 \\
\cline{3-9}
 &  & 200 & 1.51e2 & \cellcolor{gray!25}\textbf{1.18e3} & 1.93e3 & 1.44e3 & 1.67e3 & 1.84e3 \\
\cline{3-9}
 &  & 500 & 9.59e2 & \cellcolor{gray!25}\textbf{2.44e3} & 5.84e3 & 3.78e3 & 4.14e3 & 4.56e3 \\
\cline{3-9}
 &  & 1{,}000 & 1.17e3 & \cellcolor{gray!25}\textbf{\textit{5.80e3}} & 1.31e4 & 7.72e3 & \cellcolor{gray!25}\textbf{\textit{8.34e3}} & 9.09e3 \\
\cline{3-9}
 &  & 100{,}000 & 1.50e5 & \cellcolor{gray!25}\textbf{5.06e5} & 1.15e6 & 8.31e5 & 8.37e5 & 8.41e5 \\
\cline{3-9}
 &  & 1{,}000{,}000 & 1.15e6 & \cellcolor{gray!25}\textbf{\textit{6.09e6}} & 1.26e7 & 8.35e6 & \cellcolor{gray!25}\textbf{\textit{8.37e6}} & 8.40e6 \\
\hline
\multirow{10}{*}{Rosenbrock~\citep{rosenbrock1960automatic}} & \multirow{10}{*}{\includegraphics[width=2.2cm]{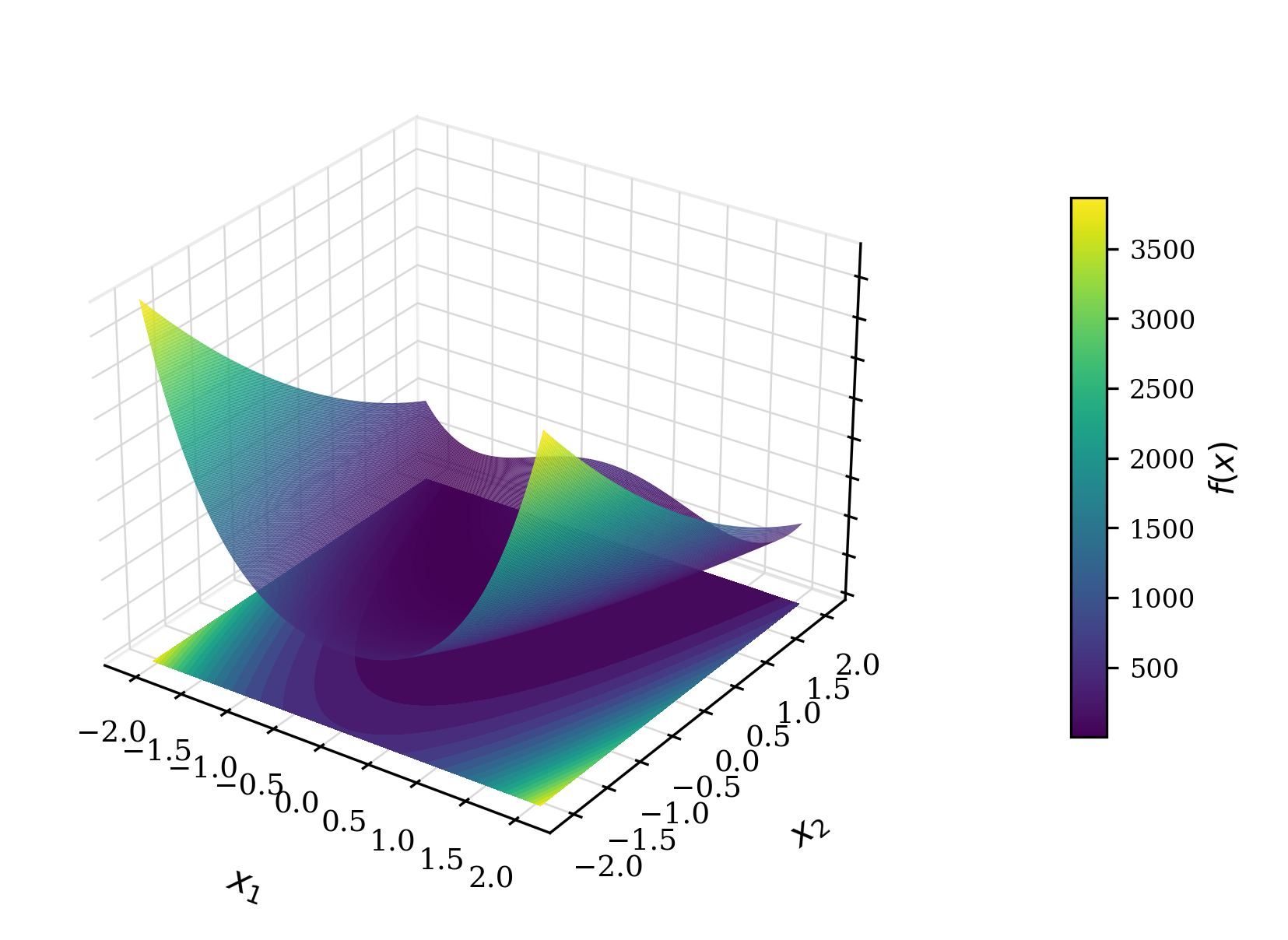}} & 2 & 5.68e-14 & \cellcolor{gray!25}\textbf{\textit{3.17e-2}} & 2.41e-1 & 1.98e-7 & \cellcolor{gray!25}\textbf{\textit{8.16e-3}} & 7.75e-1 \\
\cline{3-9}
 &  & 10 & 8.47e-3 & \cellcolor{gray!25}\textbf{\textit{3.25e0}} & 8.84e0 & 3.90e-2 & \cellcolor{gray!25}\textbf{\textit{5.00e0}} & 9.41e0 \\
\cline{3-9}
 &  & 20 & 3.63e-1 & \cellcolor{gray!25}\textbf{\textit{1.29e1}} & 1.94e1 & 7.14e0 & \cellcolor{gray!25}\textbf{\textit{1.48e1}} & 1.93e1 \\
\cline{3-9}
 &  & 50 & 2.73e1 & \cellcolor{gray!25}\textbf{\textit{4.49e1}} & 1.01e2 & 3.71e1 & \cellcolor{gray!25}\textbf{\textit{4.44e1}} & 4.90e1 \\
\cline{3-9}
 &  & 100 & 7.51e1 & \cellcolor{gray!25}\textbf{\textit{9.30e1}} & 1.45e2 & 8.62e1 & \cellcolor{gray!25}\textbf{\textit{9.39e1}} & 9.85e1 \\
\cline{3-9}
 &  & 200 & 1.72e2 & \cellcolor{gray!25}\textbf{\textit{1.94e2}} & 2.48e2 & 1.86e2 & \cellcolor{gray!25}\textbf{\textit{1.93e2}} & 1.97e2 \\
\cline{3-9}
 &  & 500 & 4.71e2 & \cellcolor{gray!25}\textbf{\textit{4.93e2}} & 6.43e2 & 4.82e2 & \cellcolor{gray!25}\textbf{\textit{4.91e2}} & 5.41e2 \\
\cline{3-9}
 &  & 1{,}000 & 9.72e2 & \cellcolor{gray!25}\textbf{\textit{9.97e2}} & 1.19e3 & 9.77e2 & \cellcolor{gray!25}\textbf{\textit{9.88e2}} & 1.04e3 \\
\cline{3-9}
 &  & 100{,}000 & 9.90e4 & \cellcolor{gray!25}\textbf{\textit{1.02e5}} & 1.24e5 & 1.01e5 & \cellcolor{gray!25}\textbf{\textit{1.02e5}} & 1.03e5 \\
\cline{3-9}
 &  & 1{,}000{,}000 & 9.90e5 & \cellcolor{gray!25}\textbf{\textit{1.03e6}} & 1.31e6 & 1.02e6 & \cellcolor{gray!25}\textbf{\textit{1.02e6}} & 1.03e6 \\
\hline
\multirow{10}{*}{Schwefel~\citep{schwefel1981numerical}} & \multirow{10}{*}{\includegraphics[width=2.2cm]{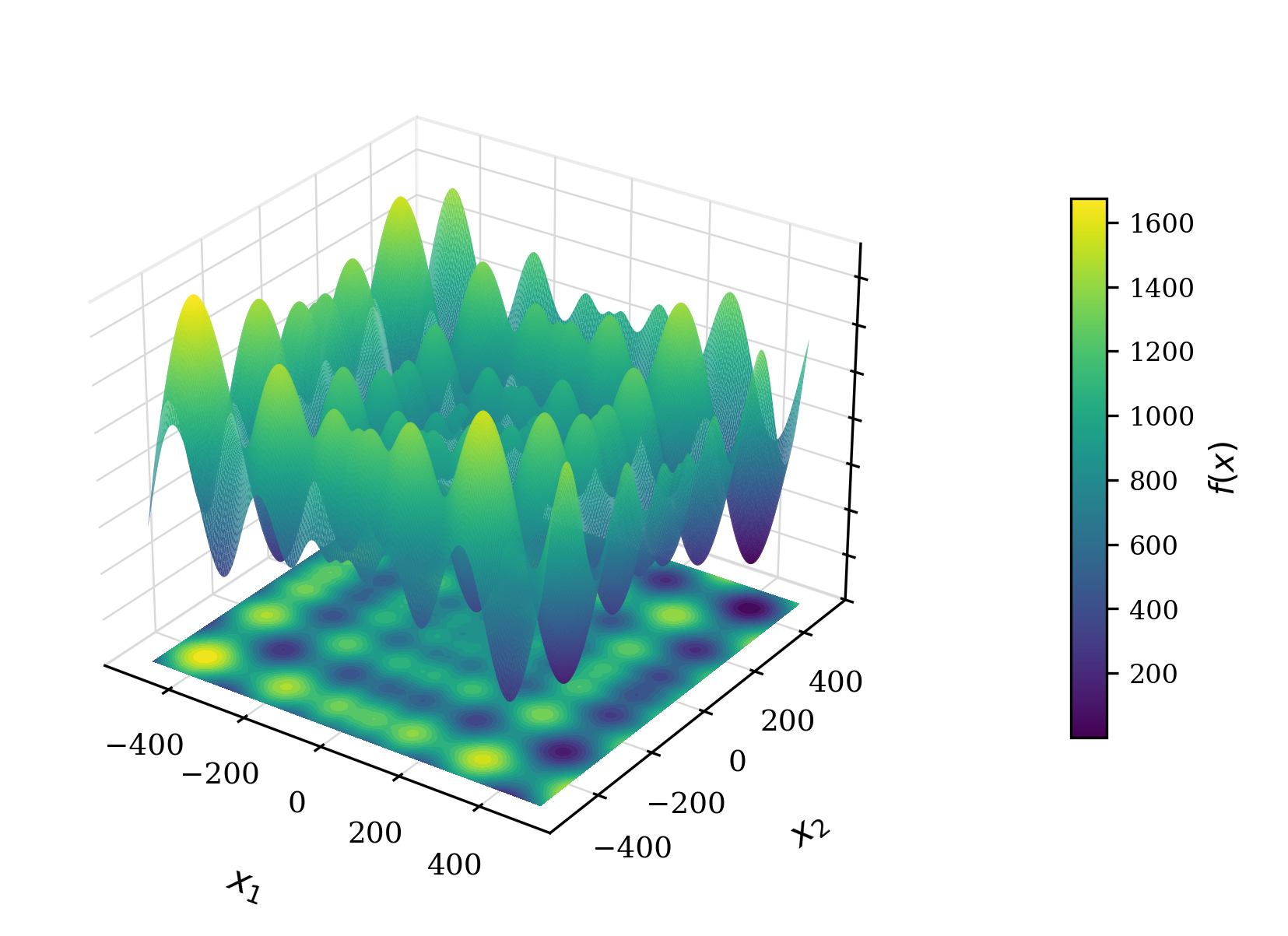}} & 2 & 0 & \cellcolor{gray!25}\textbf{1.18e2} & 4.01e2 & 1.22e2 & 5.16e2 & 8.10e2 \\
\cline{3-9}
 &  & 10 & 8.90e2 & \cellcolor{gray!25}\textbf{\textit{1.46e3}} & 3.53e3 & 1.10e3 & \cellcolor{gray!25}\textbf{\textit{2.09e3}} & 3.05e3 \\
\cline{3-9}
 &  & 20 & 2.52e3 & \cellcolor{gray!25}\textbf{\textit{4.37e3}} & 7.62e3 & 3.34e3 & \cellcolor{gray!25}\textbf{\textit{4.42e3}} & 5.62e3 \\
\cline{3-9}
 &  & 50 & 8.81e3 & 1.47e4 & 1.93e4 & 8.91e3 & \cellcolor{gray!25}\textbf{1.06e4} & 1.29e4 \\
\cline{3-9}
 &  & 100 & 2.11e4 & 3.80e4 & 3.99e4 & 1.84e4 & \cellcolor{gray!25}\textbf{2.09e4} & 2.38e4 \\
\cline{3-9}
 &  & 200 & 5.71e4 & 7.50e4 & 8.18e4 & 3.77e4 & \cellcolor{gray!25}\textbf{4.23e4} & 4.45e4 \\
\cline{3-9}
 &  & 500 & 1.58e5 & 1.71e5 & 2.02e5 & 9.94e4 & \cellcolor{gray!25}\textbf{1.04e5} & 1.11e5 \\
\cline{3-9}
 &  & 1{,}000 & 3.21e5 & 3.55e5 & 4.12e5 & 2.00e5 & \cellcolor{gray!25}\textbf{2.07e5} & 2.16e5 \\
\cline{3-9}
 &  & 100{,}000 & 3.18e7 & 3.47e7 & 3.81e7 & 2.02e7 & \cellcolor{gray!25}\textbf{2.04e7} & 2.07e7 \\
\cline{3-9}
 &  & 1{,}000{,}000 & 3.22e8 & 3.68e8 & 4.17e8 & 2.01e8 & \cellcolor{gray!25}\textbf{2.02e8} & 4.19e8 \\
\hline
\multirow{10}{*}{Sphere~\citep{dejong1975analysis}} & \multirow{10}{*}{\includegraphics[width=2.2cm]{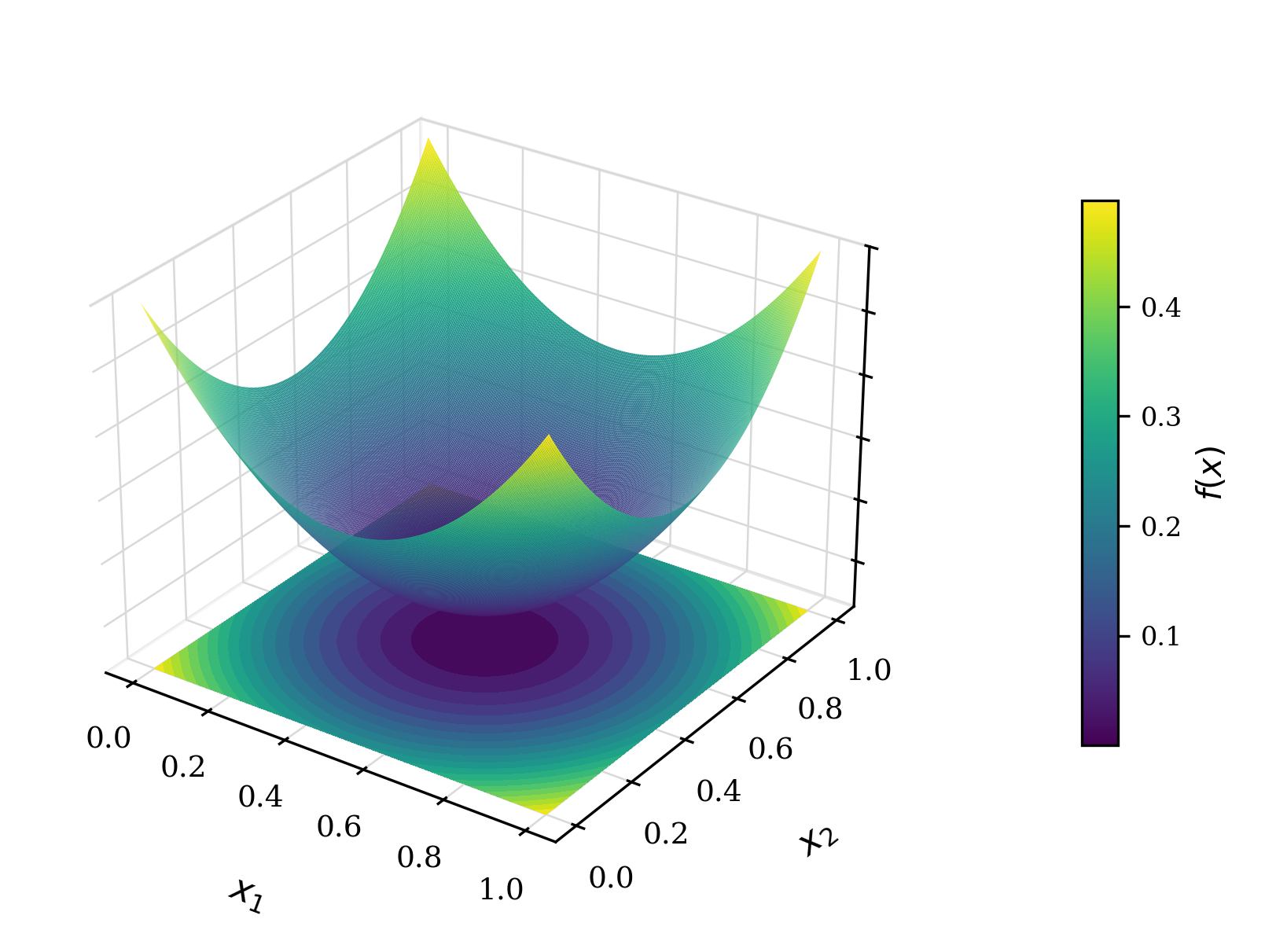}} & 2 & 4.71e-14 & 4.44e-11 & 2.73e-10 & 0 & \cellcolor{gray!25}\textbf{0} & 0 \\
\cline{3-9}
 &  & 10 & 3.03e-9 & 4.62e-9 & 5.78e-9 & 0 & \cellcolor{gray!25}\textbf{0} & 4.44e-15 \\
\cline{3-9}
 &  & 20 & 8.73e-9 & 1.10e-8 & 1.26e-8 & 0 & \cellcolor{gray!25}\textbf{1.78e-15} & 7.11e-15 \\
\cline{3-9}
 &  & 50 & 2.68e-8 & 2.98e-8 & 3.61e-8 & 1.78e-15 & \cellcolor{gray!25}\textbf{1.07e-14} & 1.87e-14 \\
\cline{3-9}
 &  & 100 & 5.90e-8 & 6.23e-8 & 1.18e-7 & 1.60e-14 & \cellcolor{gray!25}\textbf{2.49e-14} & 4.80e-14 \\
\cline{3-9}
 &  & 200 & 1.22e-7 & 1.27e-7 & 1.87e-7 & 3.73e-14 & \cellcolor{gray!25}\textbf{6.39e-14} & 8.53e-14 \\
\cline{3-9}
 &  & 500 & 3.08e-7 & 3.25e-7 & 5.69e-7 & 1.41e-13 & \cellcolor{gray!25}\textbf{1.65e-13} & 2.21e-13 \\
\cline{3-9}
 &  & 1{,}000 & 6.23e-7 & 6.60e-7 & 1.16e-6 & 2.74e-13 & \cellcolor{gray!25}\textbf{3.55e-13} & 4.40e-13 \\
\cline{3-9}
 &  & 100{,}000 & 6.25e-5 & 6.27e-5 & 6.67e-5 & 3.14e-11 & \cellcolor{gray!25}\textbf{3.24e-11} & 3.39e-11 \\
\cline{3-9}
 &  & 1{,}000{,}000 & 6.25e-4 & 6.26e-4 & 6.35e-4 & 3.06e-10 & \cellcolor{gray!25}\textbf{3.11e-10} & 3.24e-10 \\
\hline
\multirow{10}{*}{Zakharov~\citep{jamil2013literature}} & \multirow{10}{*}{\includegraphics[width=2.2cm]{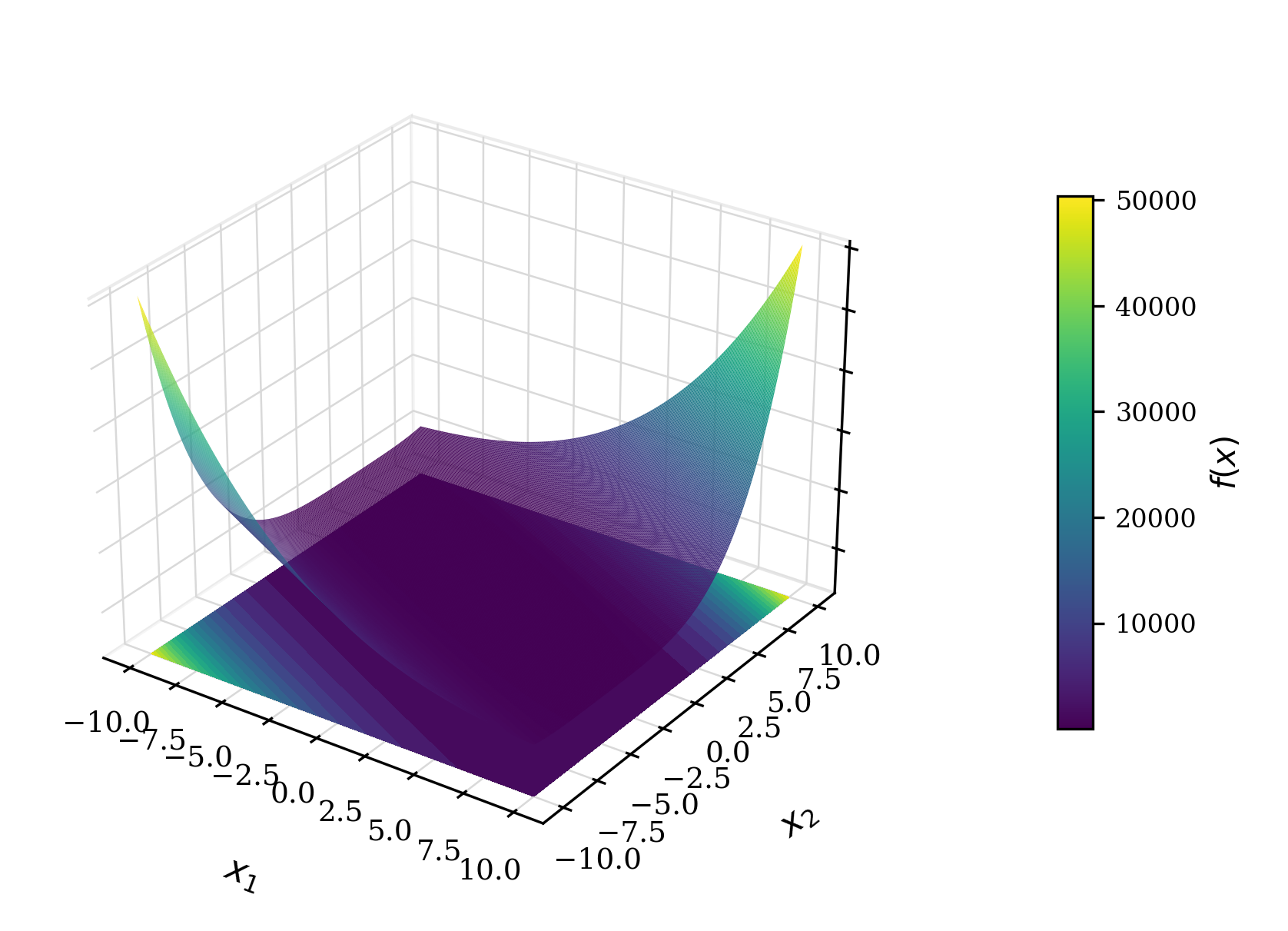}} & 2 & 0 & \cellcolor{gray!25}\textbf{\textit{0}} & 3.61e-5 & 9.60e-34 & \cellcolor{gray!25}\textbf{\textit{1.44e-7}} & 2.98e1 \\
\cline{3-9}
 &  & 10 & 3.02e-6 & \cellcolor{gray!25}\textbf{6.72e0} & 1.94e2 & 2.22e-28 & 2.22e2 & 5.38e2 \\
\cline{3-9}
 &  & 20 & 6.01e1 & \cellcolor{gray!25}\textbf{2.06e2} & 8.54e2 & 3.16e2 & 5.92e2 & 3.65e5 \\
\cline{3-9}
 &  & 50 & 1.53e2 & \cellcolor{gray!25}\textbf{7.83e2} & 3.03e3 & 1.13e3 & 1.77e3 & 7.73e6 \\
\cline{3-9}
 &  & 100 & 6.46e2 & \cellcolor{gray!25}\textbf{\textit{2.93e3}} & 7.37e3 & 2.78e3 & \cellcolor{gray!25}\textbf{\textit{3.56e3}} & 9.41e5 \\
\cline{3-9}
 &  & 200 & 1.43e3 & \cellcolor{gray!25}\textbf{4.57e3} & 8.95e3 & 5.88e3 & 7.15e3 & 8.93e7 \\
\cline{3-9}
 &  & 500 & 3.49e3 & \cellcolor{gray!25}\textbf{1.58e4} & 3.28e4 & 1.73e4 & 2.05e5 & 1.50e9 \\
\cline{3-9}
 &  & 1{,}000 & 3.95e3 & \cellcolor{gray!25}\textbf{2.68e4} & 6.47e4 & 3.42e4 & 1.11e6 & 3.43e10 \\
\cline{3-9}
 &  & 100{,}000 & 1.29e6 & \cellcolor{gray!25}\textbf{2.71e6} & 2.07e27 & 1.71e24 & 1.80e29 & 1.39e32 \\
\cline{3-9}
 &  & 1{,}000{,}000 & 4.30e6 & \cellcolor{gray!25}\textbf{1.40e9} & 3.84e34 & 4.13e32 & 3.96e36 & 2.06e38 \\
\hline
\end{tabular}}
\end{table}

At every dimension swept, EvE has at least as many significantly better cells as Adam across the seven problems. Across the $70$ (problem, $n$) cells, EvE is significantly better in $33$, Adam in $17$, and the remaining $20$ are statistically indistinguishable. The advantage is largest at the extremes: at $n\in\{2,10\}$ EvE is significantly better on four problems and Adam on one (Sphere, where both reach numerical zero), and at $n=10^6$ it still wins three problems to Adam's two, where several of these problems' gradients are least informative or Adam's own iterate diverges numerically. All $17$ of Adam's wins come from two problems, Sphere (every $n$; at $n\le10$ both reach numerical zero, Adam exactly) and Schwefel (from $n=50$), and persist at every larger dimension. Appendix~\ref{app:benchmarks} gives a per-problem discussion and the full parameter table, including the LHS population initialization used only here (Section~\ref{sec:init}).

\vspace{-2mm}
\subsection{Neural Network Training: MLP on MNIST}
\label{sec:mnist}

\begin{wrapfigure}[18]{r}{0.65\linewidth}
\vspace{-3mm}
\centering
\begin{subfigure}[b]{\linewidth}
\includegraphics[width=\linewidth]{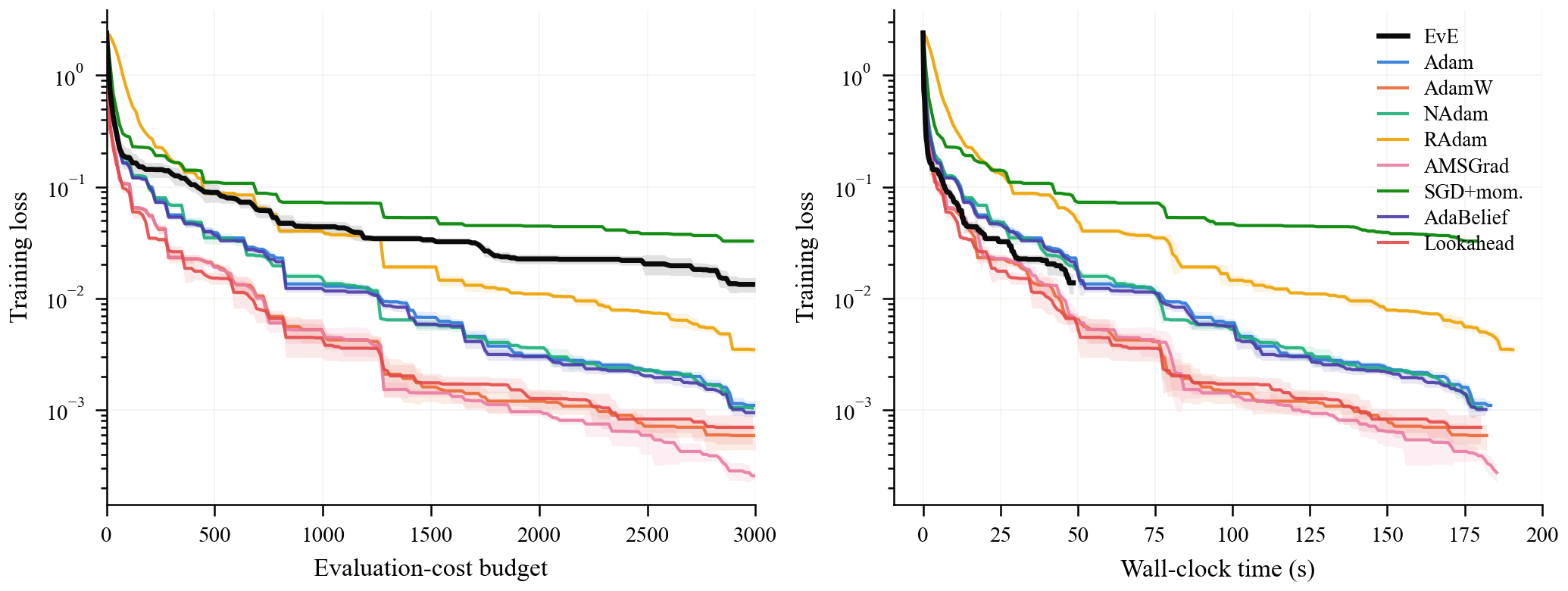}
\caption{Training loss vs.\ evaluation-cost budget (left) and vs.\ wall-clock time (right).}
\end{subfigure}
\caption{EvE vs.\ eight baselines on MLP/MNIST, under the same evaluation-cost budget (Section~\ref{sec:setup-exp}), at each method's own best learning rate over 5 seeds. (a) training runs vs.\ budget and vs.\ wall-clock time; see Figure~\ref{fig:mnist2} for panels (b)--(d).}
\label{fig:mnist}
\end{wrapfigure}

The benchmarks above are hand-built functions with a known closed form; here we ask the same question of a real network learning from real data. We train a two-hidden-layer ($1{,}024$-unit) MLP on MNIST digit classification \citep{lecun1998mnist} ($1.86$M weights, no convolutions) against eight widely used baselines: Adam, AdamW, NAdam, RAdam, AMSGrad, SGD with momentum, AdaBelief, and Lookahead-Adam, each swept over the same learning-rate grid and seeds, reported at its own best learning rate, and stopped at a shared evaluation-cost budget of $B=3{,}000$ (Section~\ref{sec:setup-exp}, Appendix~\ref{app:mnist}).

Figure~\ref{fig:mnist} summarizes the result. EvE finishes that fixed charged work roughly $3.7$--$3.9\times$ faster in wall-clock time ($49$s versus $182$--$192$s for the baselines), at a cost of about one accuracy point ($97.0\%$ versus the Adam family's $98.0$--$98.2\%$), the trade-off predicted by Section~\ref{sec:theory}'s cost analysis, since most of EvE's evaluations go to population upkeep rather than gradient steps. It is also the trade-off that matters for a hyperparameter search or early-pruning loop, where a cheap, fast signal beats a fully polished final model. Appendix~\ref{app:mnist} gives the full architecture, data split, hyperparameter table, and a panel-by-panel discussion of Figure~\ref{fig:mnist}.

\begin{figure}[b]
\ContinuedFloat
\centering
\begin{subfigure}[b]{0.32\linewidth}
\includegraphics[width=\linewidth]{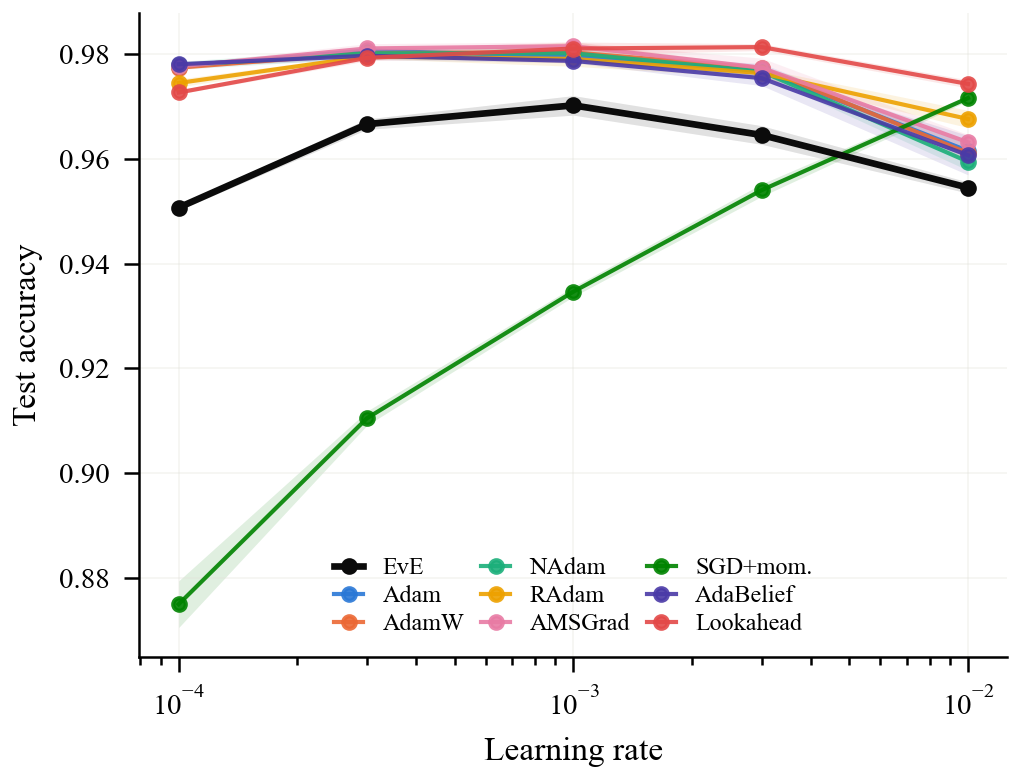}
\caption{Test accuracy vs.\ learning rate.}
\end{subfigure}\hfill
\begin{subfigure}[b]{0.32\linewidth}
\includegraphics[width=\linewidth]{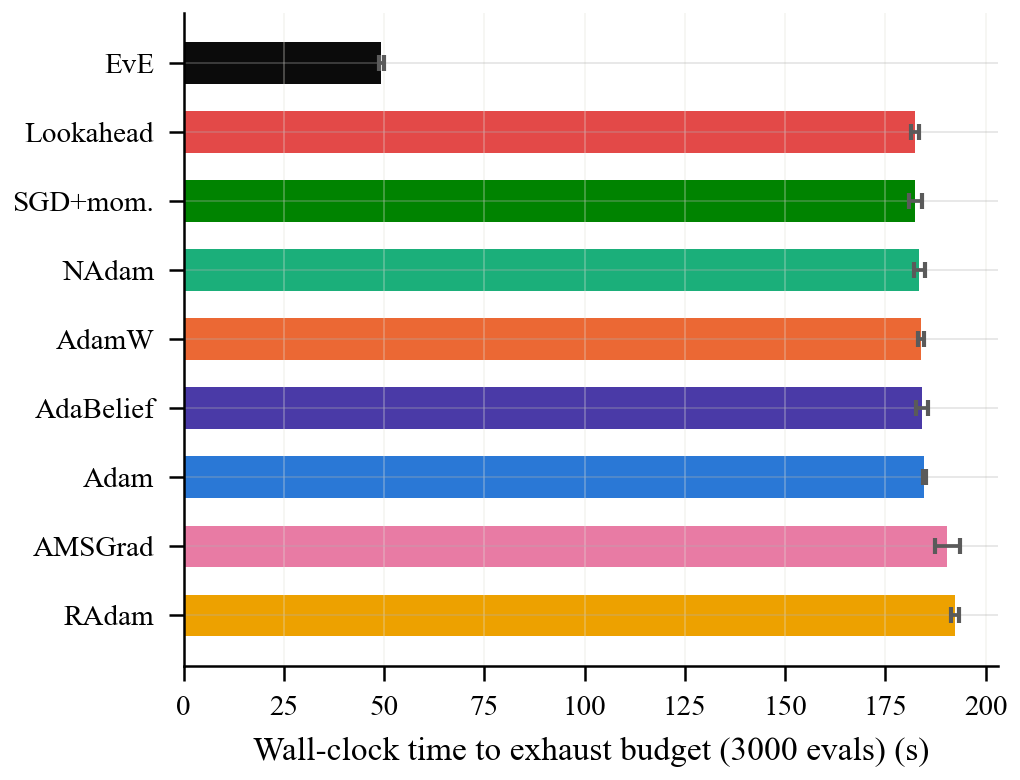}
\caption{Wall-clock time to exhaust the shared budget.}
\end{subfigure}\hfill
\begin{subfigure}[b]{0.32\linewidth}
\includegraphics[width=\linewidth]{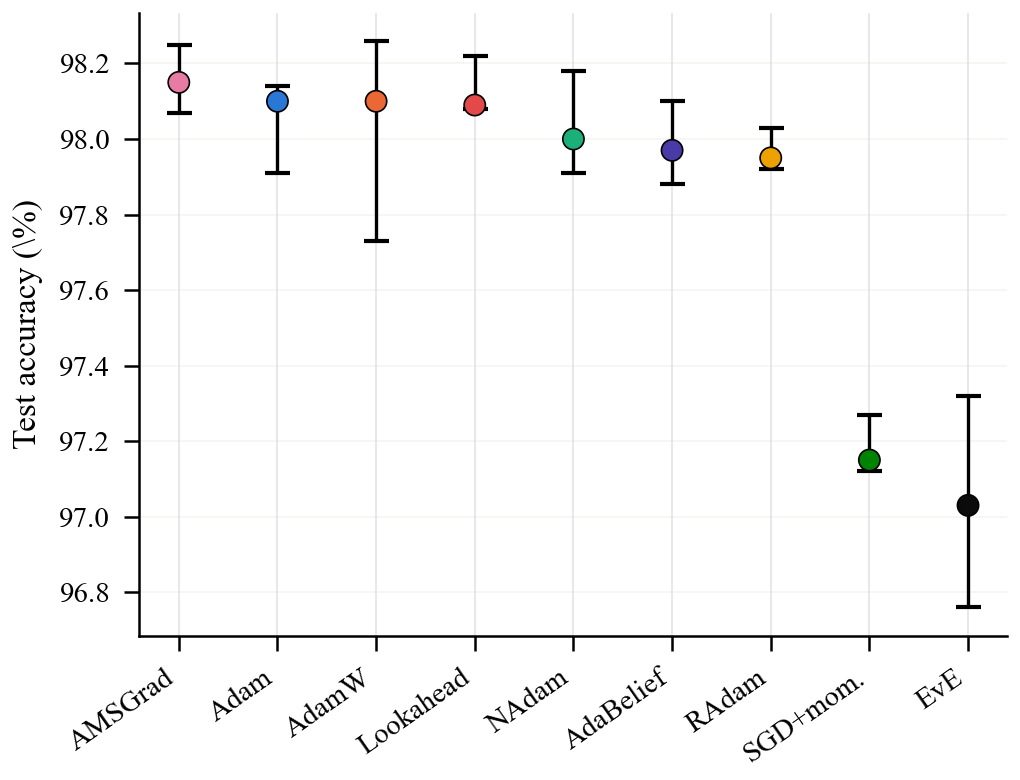}
\caption{Best-LR test accuracy (median, min-max over 5 seeds).}
\end{subfigure}
\caption[]{(cont.) (b) learning-rate sensitivity. (c) wall-clock cost of the shared budget, sorted. (d) final test accuracy reached.}
\label{fig:mnist2}
\end{figure}

\vspace{-2mm}
\subsection{Hyperparameter and Architecture Search: UCI Adult Income}
\label{sec:hpo}

Sections~\ref{sec:benchmarks}--\ref{sec:mnist} compare EvE and Adam as end-to-end trainers under a shared budget; here we instead ask whether EvE is useful one level up, as the inner-loop optimizer inside a search procedure that trains many candidate configurations. We run successive halving (SHA) \citep{jamieson2016non}, promoting only the top third of configurations at each of five geometrically increasing evaluation-cost budgets, on UCI Adult Income \citep{adult1996} (binary income classification, $32{,}561$ train / $16{,}281$ test rows), a standard, non-vision AutoML/HPO benchmark, with a small MLP. Each of the 9 methods runs its own SHA sweep, resumed (not restarted) across rungs, so a promoted configuration's wall-clock time is the honest sum of the budget it actually used. We test two search spaces: (i) learning rate and weight decay, the model architecture fixed; (ii) hidden width and depth, an architecture search, learning rate swept alongside but weight decay fixed. Appendix~\ref{app:hpo} gives the full SHA configuration, search spaces, and per-method tables.

EvE finishes the whole hyperparameter sweep in $21.8\pm0.3$s against $73.6$--$76.8$s for the baselines ($3.4$--$3.5\times$ faster), and the architecture sweep in $21.5\pm0.9$s against $66.9$--$75.8$s ($3.1$--$3.5\times$ faster), reaching a best validation accuracy within about half a point of the baseline cluster in both cases, and outright beating one baseline (sgd\_momentum) on accuracy in the architecture sweep (Figure~\ref{fig:hpo}; Appendix~\ref{app:hpo} gives the exact per-baseline gaps). 
\begin{figure}[hbt]
\centering
\begin{subfigure}[b]{0.45\linewidth}
\includegraphics[width=\linewidth]{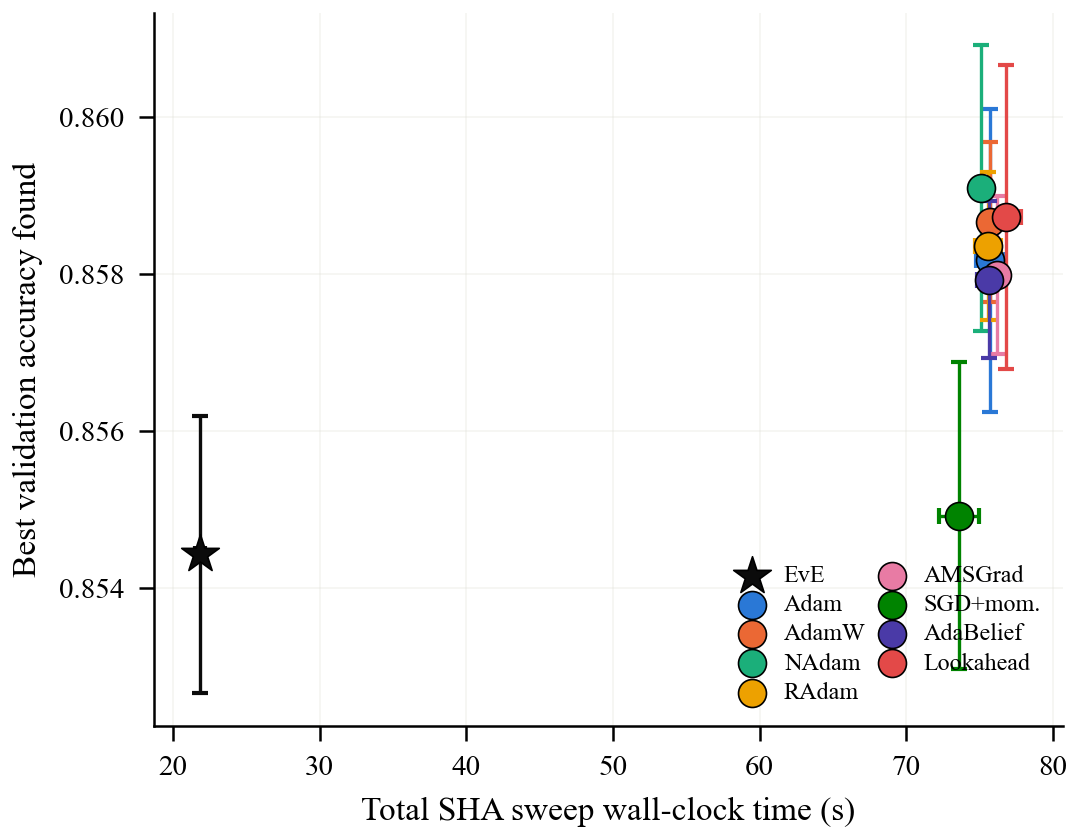}
\caption{Learning rate + weight decay search.}
\end{subfigure}\hfill
\begin{subfigure}[b]{0.45\linewidth}
\includegraphics[width=\linewidth]{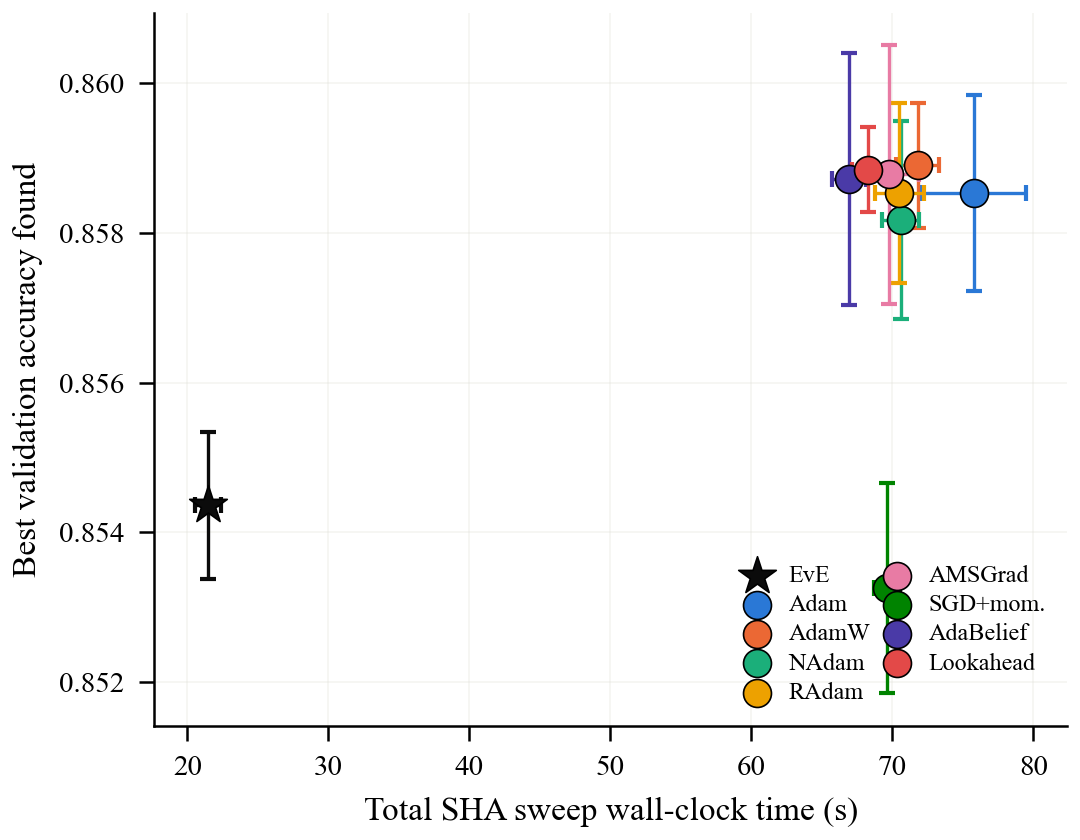}
\caption{Architecture (width + depth) search.}
\end{subfigure}
\caption{SHA sweep wall-clock time vs.\ best validation accuracy found, UCI Adult, mean $\pm$ std over 5 seeds. EvE reaches comparable accuracy in roughly a third of the wall-clock time of every baseline in both search spaces.}
\label{fig:hpo}
\end{figure}
EvE was faster than every one of the eight baselines on all $5$ seeds in both sweeps, a fully consistent trend we report at face value rather than dressing up with a significance test that a sample size this small cannot meaningfully support (Appendix~\ref{app:hpo}). EvE's ranking of the $48$ configurations at the first rung also agrees with Adam's about as well as Adam's agrees with itself across training seeds (Kendall's $\tau$ of $0.66$--$0.69$ against $0.67$--$0.69$; Appendix~\ref{app:rank-agreement}).

\vspace{-2mm}
\subsection{Real-World Fine-Tuning: LoRA on Qwen2.5-1.5B-Instruct}
\label{sec:lora}
\vspace{-1mm}

The two evaluations above are still synthetic in one sense: neither involves a pretrained model or a real instruction-tuning workload. Here we fine-tune Qwen2.5-1.5B-Instruct \citep{qwen2024technical} with LoRA adapters \citep{hu2022lora} (rank $32$, on the query/value/output projections and the MLP, matching torchtune's own reference recipe for this model) on the Alpaca instruction dataset \citep{taori2023alpaca}, in bfloat16, against the same eight baselines as Section~\ref{sec:mnist}, swept over the same learning-rate grid and seeds, reported at each method's best learning rate, and stopped at a shared evaluation-cost budget (Section~\ref{sec:setup-exp}); the only protocol change from Section~\ref{sec:mnist} is the budget size, scaled down for this far more expensive per-evaluation cost (Appendix~\ref{app:lora}).

Figure~\ref{fig:lora} summarizes the result. EvE finishes the shared budget in $35.0\pm0.6$s, against $58.8$--$70.7$s for the baselines at their own best learning rate ($1.7$--$2.0\times$ faster), at a cost of roughly $11\%$ higher test loss ($1.240\pm0.046$ versus the Adam family's $1.115$--$1.124$), the same trade-off found on MNIST (Section~\ref{sec:mnist}), now on a real pretrained model and instruction-tuning task, and it holds at a scale ($1.5$B parameters) two orders of magnitude larger than MNIST's MLP. Appendix~\ref{app:lora} gives the full architecture, data split, hyperparameter table, and a panel-by-panel discussion of Figure~\ref{fig:lora}.

\vspace{-2mm}
\section{Discussion}
\vspace{-2mm}

\subsection{When EvE Helps and When Adam Does}
\label{sec:when-helps}
\vspace{-2mm}

\begin{figure}[t]
\centering
\scalebox{1}{%
\begin{minipage}{\linewidth}
\begin{subfigure}[b]{0.48\linewidth}
\includegraphics[width=\linewidth]{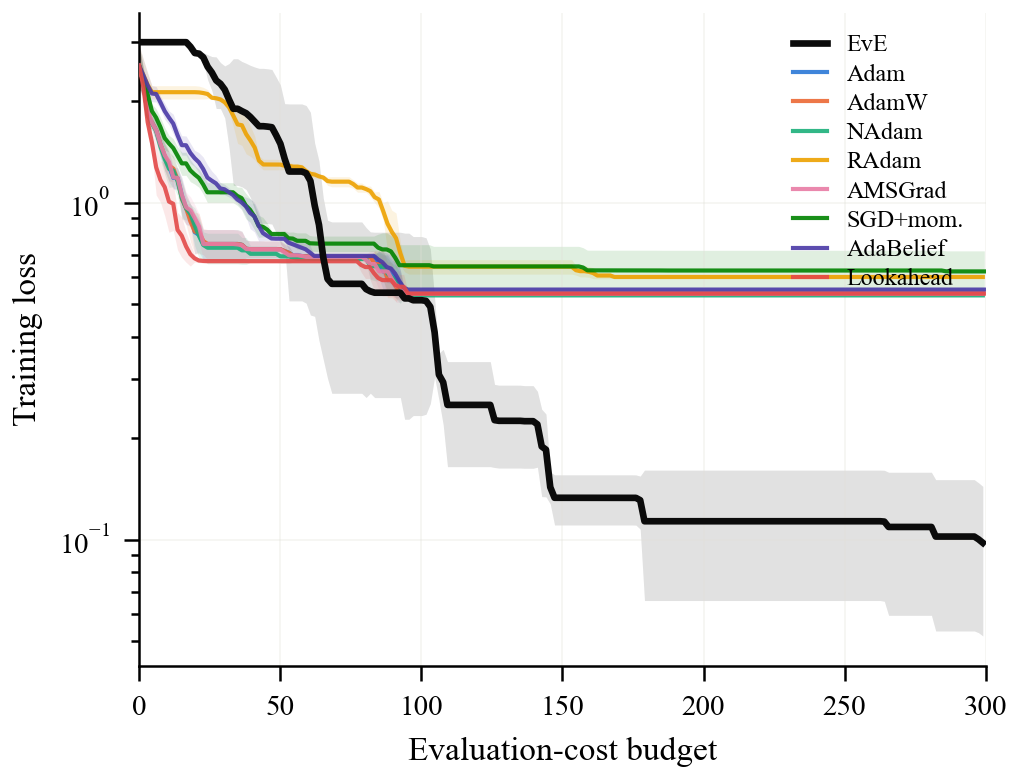}
\caption{Training loss vs.\ evaluation-cost budget.}
\end{subfigure}\hfill
\begin{subfigure}[b]{0.48\linewidth}
\includegraphics[width=\linewidth]{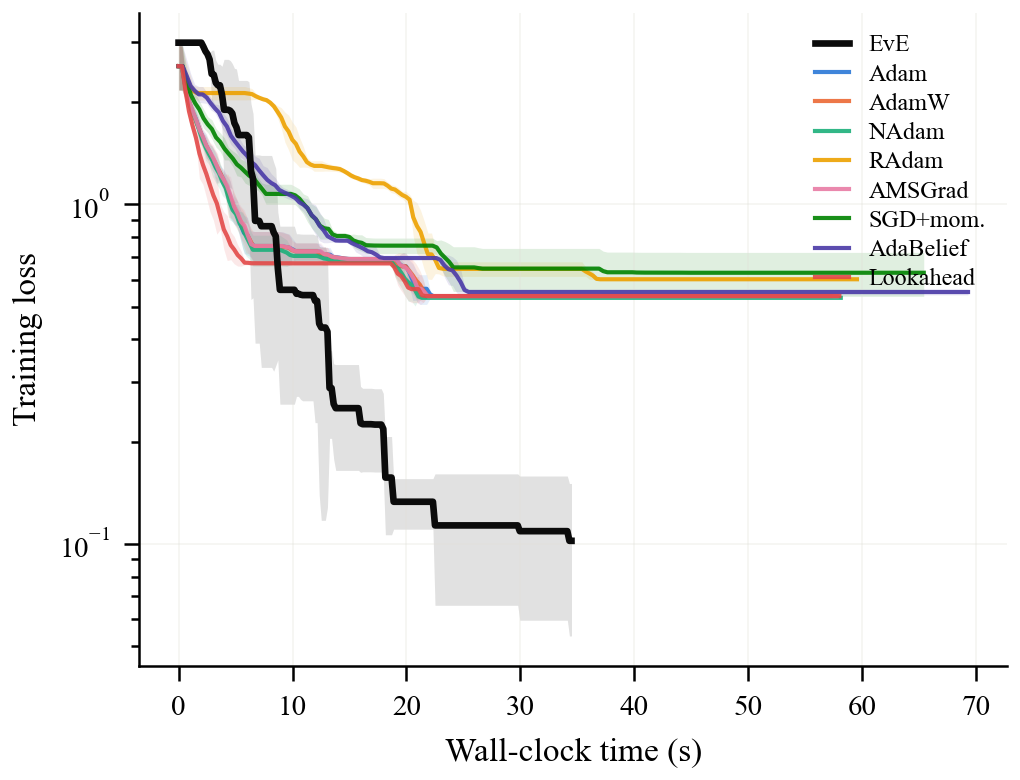}
\caption{Training loss vs.\ wall-clock time.}
\end{subfigure}\\[2pt]
\begin{subfigure}[b]{0.32\linewidth}
\includegraphics[width=\linewidth]{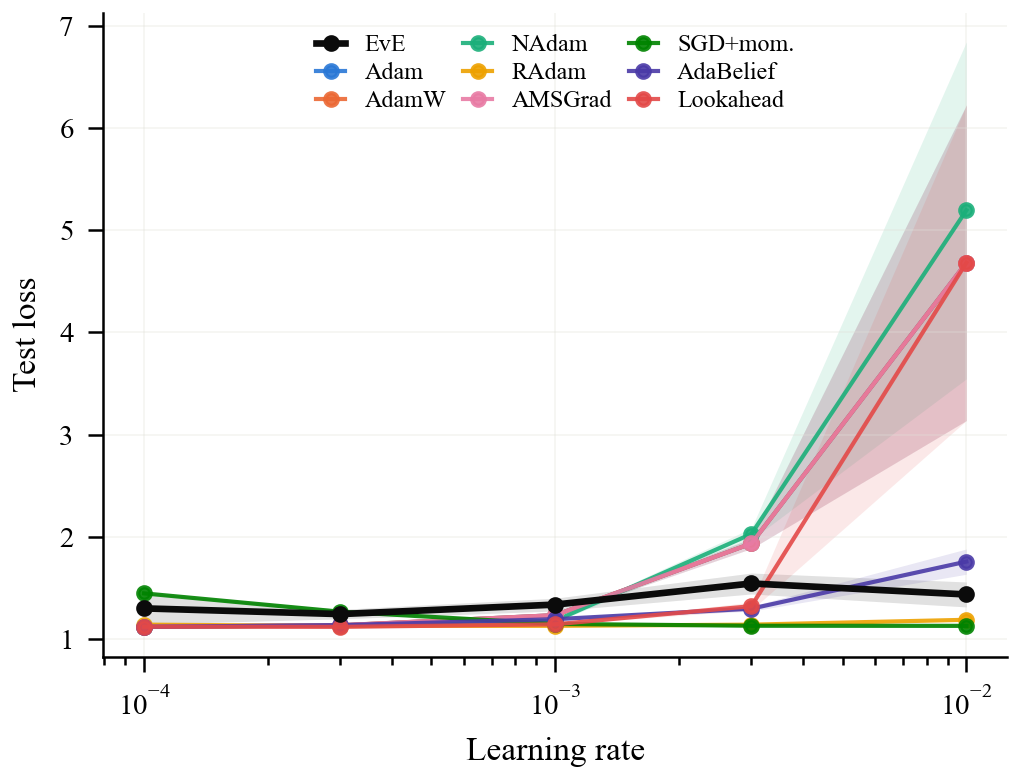}
\caption{Test loss vs.\ learning rate.}
\end{subfigure}\hfill
\begin{subfigure}[b]{0.32\linewidth}
\includegraphics[width=\linewidth]{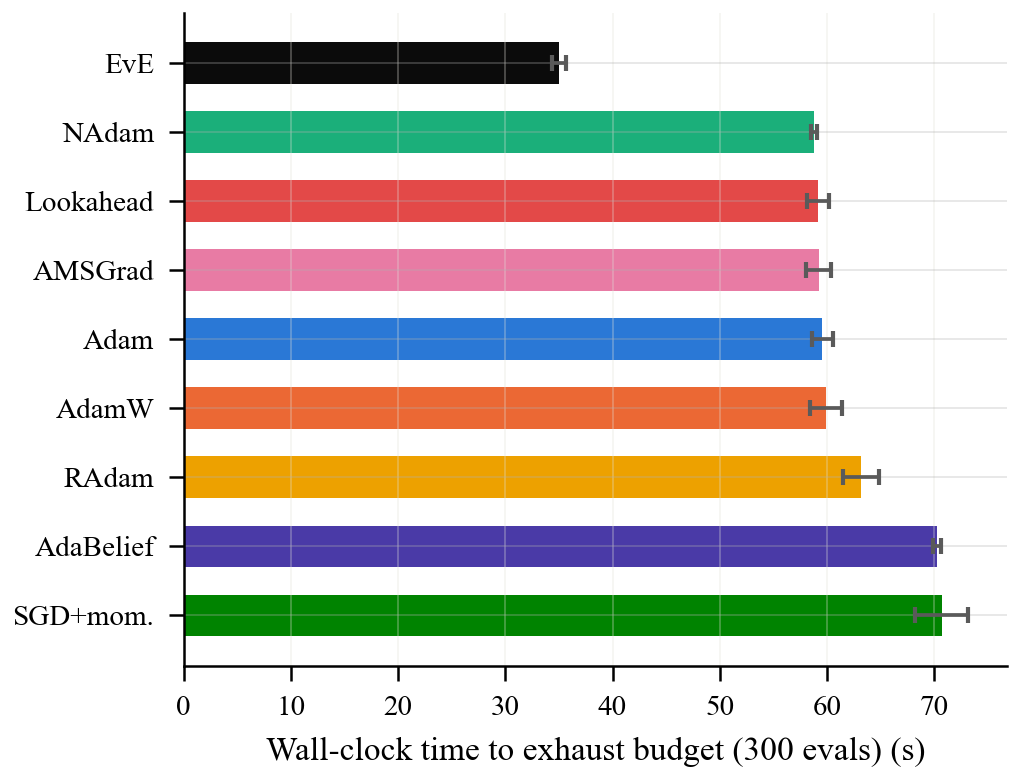}
\caption{Wall-clock time to exhaust the shared budget.}
\end{subfigure}\hfill
\begin{subfigure}[b]{0.32\linewidth}
\includegraphics[width=\linewidth]{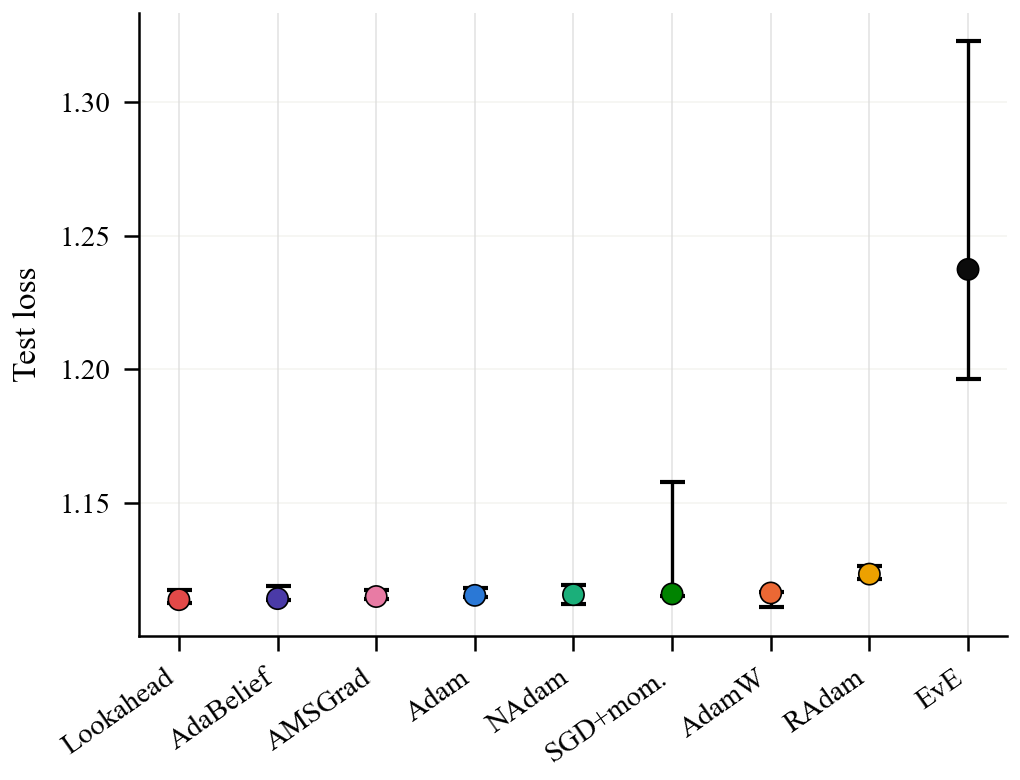}
\caption{Best-LR test loss (median, min-max over 5 seeds).}
\end{subfigure}
\end{minipage}}
\caption{EvE vs.\ eight baselines fine-tuning Qwen2.5-1.5B-Instruct with LoRA on Alpaca, under the same evaluation-cost budget (Section~\ref{sec:setup-exp}), at each method's own best learning rate over 5 seeds. (a)/(b) training runs vs.\ budget and vs.\ wall-clock time. (c) learning-rate sensitivity. (d) wall-clock cost of the shared budget, sorted. (e) final test loss reached (lower is better).}
\label{fig:lora}
\end{figure}
EvE's advantage concentrates in one recognizable regime, and is small or reverses outside it. On the synthetic benchmarks (Table~\ref{tab:scalable-benchmarks}), it wins or ties Adam in $53$ of the $70$ (problem, $n$) cells swept ($76\%$), most decisively on the three multimodal problems (Ackley, Griewank, Rastrigin) and at the extreme dimensions. Adam is strongest on the one convex problem (Sphere) and on Schwefel (from $n=50$); on Rosenbrock the two are statistically indistinguishable at every dimension (Appendix~\ref{app:benchmarks}).

On real neural-network training the pattern shows up differently: across all three tasks (Sections~\ref{sec:mnist}, \ref{sec:lora}, \ref{sec:gsm8k}), EvE finishes the same charged budget $1.7$--$3.9\times$ faster (geometric mean $\sim2.3\times$), a step behind Adam's final quality ($\sim1$ accuracy point on MNIST, $\sim9$--$11\%$ relative test loss on the two LoRA tasks, $5$ accuracy points on GSM8K's full test set; Appendices~\ref{app:gsm8k},~\ref{app:times}). This matches the cost analysis of Section~\ref{sec:theory}: most evaluations go to population upkeep, so the budget buys wall-clock speed, not final precision.

\vspace{-2mm}
\subsection{Practical Considerations}
\label{sec:limitations}
\vspace{-2mm}
A few properties of EvE are worth knowing before using it this way. Its learning rate does not transfer from Adam's and needs its own search ($3$--$100\times$ above the Adam family's depending on the task, Appendices~\ref{app:lora},~\ref{app:gsm8k},~\ref{app:lr-selection}). Its four-member population shows real run-to-run variance -- the widest seed spread of any of the nine methods on both LoRA tasks. Its memory footprint is fixed but non-trivial ($13\times$ the live parameter count against Adam's $4\times$; measured peak is lower, $2.95\times$, Appendix~\ref{app:algo}). Its population size and mutation/initialization scheme are set per task family rather than adapted at run time.

\vspace{-2mm}
\section{Conclusion}
\vspace{-3mm}
We have introduced EvE, a small-population differential evolution optimizer with a gradient-based fallback, evaluated against Adam and seven other baselines across seven synthetic benchmarks, three real neural-network training tasks, and two successive-halving search tasks, under a shared, evaluation-cost-matched budget. EvE trades a modest amount of final quality for a repeatable wall-clock speedup, winning or tying Adam on the benchmarks' multimodal problems and at the extremes of dimension, and finishing real training runs $1.7$--$3.9\times$ faster, a step behind Adam's own final accuracy or loss -- a fast, gradient-aware proxy for the search-heavy step before training a single model. Inside successive halving on UCI Adult, EvE finishes hyperparameter and architecture searches $3.1$--$3.5\times$ faster while ranking configurations about as consistently with Adam as Adam does with itself across seeds. Future work includes adapting its population, mutation, and initialization automatically, and extending the search-loop evidence across datasets.

\subsubsection*{Reproducibility Statement}

We have made a deliberate effort to make every number in this paper reproducible, not just describable. The full algorithm is given as executable pseudocode (Algorithm~\ref{alg:spade}), with every hyperparameter it depends on (population size, DE weight and jitter, PM's distribution index and mutation rate, Adam-fallback learning rate/steps/weight decay) stated as an explicit value rather than left implicit, separately for each experiment family: Table~\ref{tab:scalable-hparams} for the synthetic benchmarks, Table~\ref{tab:mnist-hparams} for MLP/MNIST, and Table~\ref{tab:lora-hparams} for both LoRA experiments (Appendices~\ref{app:benchmarks}--\ref{app:gsm8k}). Every stochastic draw in EvE traces back to a single integer seed through two persistent generators that are never reseeded mid-run (Section~\ref{sec:theory}'s Reproducibility paragraph), and every reported result is aggregated over multiple independent seeds, $21$ for the synthetic benchmarks and $5$ for every neural-network task, stated explicitly rather than left to a single lucky run (the one exception is the single-run budget-extension check in Appendix~\ref{app:gsm8k}, labeled as such there). The evaluation-cost budget that every method is held to, and the exact rule for converting a measured backward pass into that same currency, is given as a closed-form accounting rule (Appendix~\ref{app:eval-cost}), not an informal description, so a re-implementation can reproduce the same stopping condition rather than approximate it. All statistical claims on the synthetic benchmarks use a fully specified test (one-sided Wilcoxon signed-rank, paired by seed) and a fully specified multiple-comparison correction (Holm-Bonferroni, family-wise $\alpha=0.05$ across all $140$ tests), both stated in Table~\ref{tab:scalable-hparams} rather than left to the reader to infer from the table's shading alone. Every dataset split (MNIST's $90\%/10\%$ train/val carve-out, Alpaca's $90\%/5\%/5\%$ train/val/test carve-out, GSM8K's own fixed train/test split plus a train-side validation carve-out) uses a fixed, stated seed shared identically by every method compared, so no method sees an easier or harder split than any other. Together with the seeds and hyperparameters reported here, this is enough to regenerate every result in this paper.

\vspace{-2mm}
\section*{AI Use Statement}
\vspace{-2mm}

AI tools were used in preparing this paper in the following, limited ways. Grammarly was used for fixing typos and grammatical errors in the writing. Claude Sonnet was used for literature survey.

\vspace{-2mm}
\bibliography{iclr2026_conference}
\bibliographystyle{iclr2027_conference}

\newpage
\appendix
\begin{center}
    \huge\bfseries Appendix
\end{center}
\section{Implementation, Memory Cost, and Extended Method Discussion}
\label{app:algo}

\subsection{Evaluation-Cost Accounting}
\label{app:eval-cost}

Section~\ref{sec:setup-exp} holds every method to the same evaluation-cost budget $B$; this subsection gives the exact accounting rule and why it is built this way. Budgeting by evaluation count is standard in evolutionary computation, since a population method has no gradient step or epoch to count, only candidate solutions it asks $f$ to score. That convention was fixed long before anyone paired an EA with a gradient-based fallback for training neural networks, so it says nothing about what a backward pass should cost, and we fill that gap explicitly below rather than silently adopting either paradigm's native accounting.

\paragraph{Counting a forward pass.} On the synthetic benchmarks, where a single call scores a whole batch of $r$ candidate points at once (the population, or a baseline's single trajectory), that call costs exactly $r$: $\text{cost}\gets\text{cost}+r$. On MLP/MNIST, where EvE and every baseline instead call a closure that scores exactly one weight state (the model's current parameters) per call, one call costs exactly $1$. Both are the same currency Section~\ref{sec:theory} calls $C_f$; only the batching differs, not the meaning of one unit of cost.

\paragraph{Counting a backward pass.} A gradient is not free. Whenever either method computes $g=\nabla_z f$ by reverse-mode automatic differentiation (EvE's Adam fallback, Section~\ref{sec:adam-fallback}, and Adam's own step), the wall time $\tau_{\text{bwd}}$ that computation took is measured directly (GPU calls are synchronized before timing, so kernel-launch asynchrony is not mistaken for zero cost) and converted to forward-equivalent units using that run's own measured forward cost:
\begin{equation}
\label{eq:budget-cost}
\hat C_f=\frac{\sum \tau_{\text{fwd}}}{n_{\text{evals}}},\qquad \text{cost}\gets\text{cost}+\frac{\tau_{\text{bwd}}}{\hat C_f},
\end{equation}
where $n_{\text{evals}}$ and $\sum\tau_{\text{fwd}}$ are that run's forward-pass count and total measured forward time so far. $\hat C_f$ is never assumed (e.g. as a fixed $c_\nabla$ multiplier); it is measured on the same hardware, problem, and dimension $n$ as the run it charges, and each backward's contribution is added once and never revisited, so cost is monotone non-decreasing by construction. A run terminates the first time $\text{cost}\ge B$; we use $B=2{,}000$ for the synthetic benchmarks and $B=3{,}000$ for MLP/MNIST (Appendix~\ref{app:mnist}).

\paragraph{Why this matters.} EvE's Adam fallback and vanilla Adam both spend gradients, but at different rates: Adam takes one gradient step every evaluation, EvE only when its DE/PM candidate fails to improve on the incumbent (Section~\ref{sec:adam-fallback}). Charging forward passes alone would let the method that backpropagates more often run for free relative to the one that does not; the accounting above closes that gap for both methods identically, on the same hardware and problem rather than a literature-assumed ratio.

\subsection{Vectorized Updates and Memory Cost}

Within the steady-state loop, the per-iteration operations (the DE combination of line 8 and PM's coordinate write) touch only the single challenger being constructed that iteration, each as one vectorized elementwise op over its $n$ coordinates, so their dispatch overhead is $O(1)$ per iteration regardless of $n_{\text{pop}}$; only $O(n)$ (DE) or $O(k)$ (PM, $k=1$) arithmetic actually scales with the problem. The one-time proximity initialization (line 1) instead loops once over the $n_{\text{pop}}-1$ extra population members, a fixed, constant-size cost paid once per run, not per iteration. Evaluating a candidate substitutes its coordinates directly into $f$'s own input (a model's parameters, or a flat vector) before calling $f$, so nothing about how $f$ itself is computed is changed. The Adam fallback keeps one persistent pair of moment vectors $(m,v)\in\mathbb{R}^n\times\mathbb{R}^n$ per population slot, allocated once, rather than fresh buffers every burst.

This is also where the memory cost of Section~\ref{sec:theory} comes from. Beyond the $1\times$ live parameter copy being optimized, the population itself holds $n_{\text{pop}}=4$ further full copies ($4\times$), and the $n_{\text{pop}}=4$ persistent Adam states each hold two moment vectors ($2\times$ per slot, $8\times$ total), for $4+8=12\times$ extra storage, i.e. $13\times$ total. A plain Adam optimizer needs only its $1\times$ live parameters, $1\times$ gradient, and $2\times$ moments, i.e. $4\times$. The multiplier is fixed by $n_{\text{pop}}=4$ and does not grow with $n$.

\paragraph{Measured memory versus this theoretical count.} The $13\times$-versus-$4\times$ figures above count only parameter-and-optimizer-state storage; they say nothing about total GPU memory in a real run, which also includes activations, PyTorch's caching-allocator overhead, and CUDA context memory, none of which scale with $n_{\text{pop}}$. To check how much of a gap that leaves, we measured actual peak GPU memory on the MLP/MNIST setup (Section~\ref{sec:mnist}): starting a fresh CUDA context per method (so one method's cached-but-freed memory never leaks into the other's reading), building the model, resetting PyTorch's peak-memory counter (\texttt{torch.cuda.reset\_peak\_memory\_stats}), then running $200$ real training iterations (Adam: $200$ gradient steps; EvE: $200$ steady-state iterations, enough that every one of its $4$ population slots' Adam-fallback optimizer fired and populated its full moment state, verified directly rather than assumed) and reading \texttt{torch.cuda.max\_memory\_allocated}/\texttt{max\_memory\_reserved} at the end. Figure~\ref{fig:memory} reports the result: EvE's measured peak allocated memory is $2.95\times$ Adam's ($213.95$MB versus $72.43$MB; $2.86\times$ on peak reserved memory), noticeably below the $3.25\times$ the parameter-storage-only count in this appendix would predict ($13/4$). The gap is consistent with the explanation above: a fixed, model-independent slice of GPU memory (activations for this MLP's minibatch, allocator bookkeeping) is shared identically by both methods and does not grow with EvE's population, diluting the ratio actual total memory shows relative to the ratio parameter storage alone would show. We report this as a real, measured correction to the theoretical multiplier, not a replacement for it: the $13\times$/$4\times$ count in this appendix is still the correct answer to ``how much more parameter-and-optimizer-state storage does EvE hold,'' just not the correct answer to ``how much more total GPU memory does a real EvE run use,'' which is smaller.

\begin{figure}[h]
\centering
\includegraphics[width=0.55\linewidth]{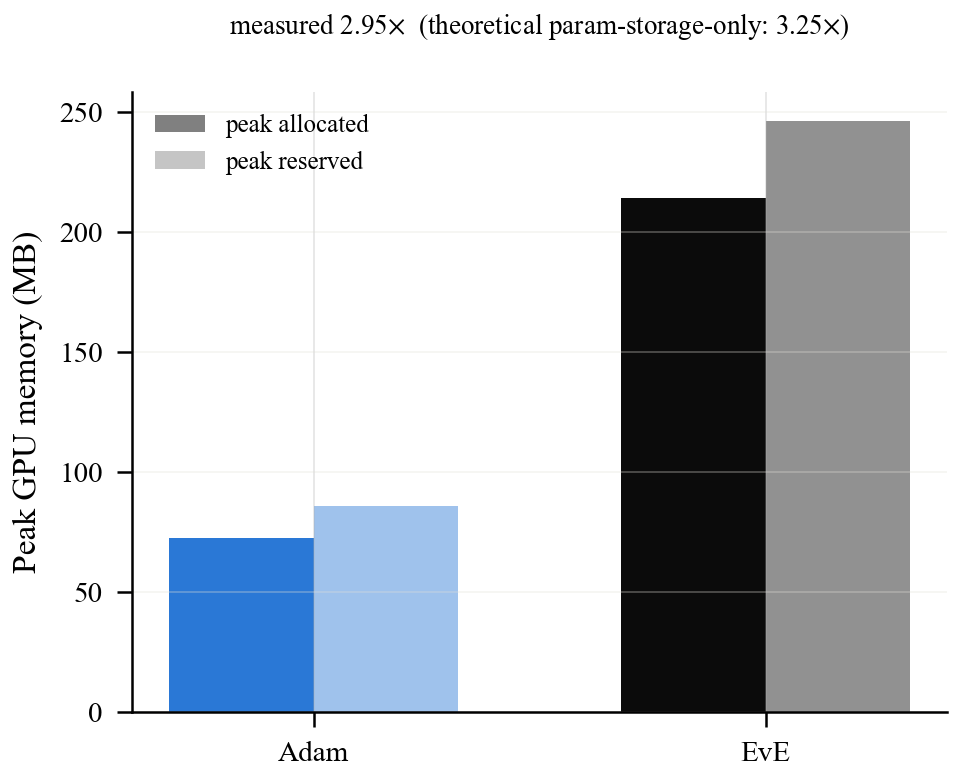}
\caption{Measured peak GPU memory, EvE vs.\ Adam, on MLP/MNIST (Section~\ref{sec:mnist}), $200$ real training iterations per method, each in its own CUDA context. The measured ratio ($2.95\times$ peak allocated) sits below the parameter-storage-only theoretical ratio ($3.25\times$) derived above.}
\label{fig:memory}
\end{figure}

\subsection{How the Batch Advances}
\label{app:batch}

When $f$ is a minibatch loss $f(x;\mathcal{B})=\frac{1}{|\mathcal{B}|}\sum_{\xi\in\mathcal{B}}\ell(x;\xi)$, one new minibatch $\mathcal{B}_t$ is drawn exactly once, at the very start of each iteration $t$ (line 6 of Algorithm~\ref{alg:spade}); every evaluation for the rest of that iteration, including every step inside a fallback burst, is scored on that same $\mathcal{B}_t$. Only the challenged slot's record is refreshed on $\mathcal{B}_t$ at the start of the iteration, $f_i^{(t)}\!\gets\! f(x_i;\mathcal{B}_t)$; the other three slots keep whatever value they were last scored at (a possibly different, older batch) until their own turn comes around, so the extra cost of this refresh is one evaluation per iteration, not $n_{\text{pop}}$. Because $\mathrm{Var}_{\mathcal{B}}\!\big[f(x;\mathcal{B})-f(x';\mathcal{B})\big]\ll \mathrm{Var}_{\mathcal{B},\mathcal{B}'}\!\big[f(x;\mathcal{B})-f(x';\mathcal{B}')\big]$ for nearby $x,x'$, pairing the child, the fallback burst, and the slot's own refreshed value on the same $\mathcal{B}_t$ (common random numbers) keeps the greedy comparison dominated by the true fitness gap rather than sampling noise, and lets an $n_{\text{adam}}$-step fallback burst train on one fetched batch instead of paying a fresh data-fetch every step.

\subsection{Why PM Helps the Benchmarks but not Neural Network Training}
\label{app:pm}

PM (Section~\ref{sec:mutation}) perturbs exactly $k=1$ coordinate of $z_{\text{child}}$, chosen using the classical per-gene mutation rate $p=1/n$, and is guarded by its own accept/reject gate: the mutant only replaces $z_{\text{child}}$ if it scores better on the same batch. Because of that gate, PM cannot make $z_{\text{child}}$ worse; the only real question is whether it is worth its one extra evaluation, which depends on how often that single-coordinate mutant actually wins.

On the synthetic benchmarks in Table~\ref{tab:scalable-benchmarks}, several problems are explicitly multimodal ($f$ has many local minima whose basins are separated by regions where $\nabla f$ points away from the global optimum). A gradient step from $z_{\text{child}}$ is only locally correct in that setting, so an undirected, single-coordinate nudge has a real, non-negligible chance of landing in a different, better basin and winning the gate. A neural network's training loss, in contrast, is a smoothed average over a minibatch, $f(x;\mathcal{B})=\frac{1}{|\mathcal{B}|}\sum_{\xi\in\mathcal{B}}\ell(x;\xi)$, in $n$ that reaches into the millions: changing one coordinate out of millions is a vanishing perturbation relative to the curvature the gradient already resolves almost everywhere, so the mutant essentially never beats the clean child. In that regime PM's accept/reject gate almost always rejects, and the run pays for an extra evaluation with no return. This is why EvE runs PM only on the benchmarks and skips it for neural network training, going directly from the DE child to the Adam fallback comparison of Section~\ref{sec:adam-fallback}.

\subsection{Learning-Rate Selection: Test Versus Validation}
\label{app:lr-selection}

Wherever a method is reported at its own best learning rate (MLP/MNIST, LoRA/Alpaca, LoRA/GSM8K), the rate is chosen from the shared five-value grid by the mean held-out test metric over the five seeds (test accuracy on MNIST, test loss on the two LoRA tasks), with the same rule for every method. Choosing on test data makes absolute numbers slightly optimistic for every method, so we also computed, from the same saved runs, what changes if the rate is chosen on the validation metric instead (validation accuracy on MNIST, validation loss on the LoRA tasks). Table~\ref{tab:lr-selection} summarizes the comparison.

\begin{table}[h]
\centering
\caption{Effect of choosing each method's learning rate on validation instead of test data, from the same saved runs.}
\label{tab:lr-selection}
\begin{tabular}{lcccc}
\hline
Task & Methods with same rate & EvE rate (test / val.) & EvE result (test / val.) & Largest baseline change \\
\hline
MLP/MNIST & $5$ of $9$ & $10^{-3}$ / $10^{-3}$ & $97.03\%$ / $97.03\%$ & $0.12$ points \\
LoRA/Alpaca & $7$ of $9$ & $3\times10^{-4}$ / $3\times10^{-4}$ & $1.240$ / $1.240$ & $0.002$ \\
LoRA/GSM8K & $7$ of $9$ & $10^{-2}$ / $10^{-4}$ & $0.640$ / $0.654$ & $0.0004$ \\
\hline
\end{tabular}
\end{table}

On MNIST and Alpaca, EvE's own rate and result do not change, and no baseline moves by more than the amount shown, so the comparison is stable. The exception is EvE on GSM8K: chosen on test loss its best rate is $10^{-2}$, chosen on validation loss it is $10^{-4}$ (the same as seven of the baselines), and its test loss at the validation-chosen rate is $0.654\pm0.019$ instead of $0.640\pm0.016$, which widens the gap to the baselines' cluster (means $0.585$--$0.590$) from about $9\%$ to about $11\%$. Two statements in this paper therefore depend on the selection rule and hold only under test-based selection: that EvE's GSM8K rate is two orders of magnitude above the baselines', and that the GSM8K loss gap is smaller than Alpaca's (under validation-based selection both are about $11\%$). We report test-based selection in the main results because it was fixed before these checks and applied identically to every method.

\subsection{Training-Time Summary}
\label{app:times}

Table~\ref{tab:times} collects the wall-clock time each experiment spends on the shared, evaluation-cost-matched budget, in seconds, as mean $\pm$ std over $5$ seeds at each method's own best learning rate (EvE against the range over the eight baselines). The UCI Adult rows are the total wall-clock time of a full successive-halving sweep.

\begin{table}[h]
\centering
\caption{Wall-clock training time in seconds under the shared budget, EvE against the eight baselines (range).}
\label{tab:times}
\begin{tabular}{lccc}
\hline
Task & EvE (s) & Baselines (s) & Speedup \\
\hline
MLP/MNIST ($B=3{,}000$) & $49.3$ & $182.3$--$192.3$ & $3.7$--$3.9\times$ \\
LoRA/Alpaca ($B=300$) & $35.0\pm0.6$ & $58.8$--$70.7$ & $1.7$--$2.0\times$ \\
LoRA/GSM8K ($B=900$) & $100.4\pm4.7$ & $193.8$--$198.7$ & $1.9$--$2.0\times$ \\
UCI Adult, hyperparameter search & $21.8\pm0.3$ & $73.6$--$76.8$ & $3.4$--$3.5\times$ \\
UCI Adult, architecture search & $21.5\pm0.9$ & $66.9$--$75.8$ & $3.1$--$3.5\times$ \\
\hline
\end{tabular}
\end{table}

\section{Synthetic Benchmarks: Parameters and Detailed Analysis}
\label{app:benchmarks}

\paragraph{Parameters.} Table~\ref{tab:scalable-hparams} lists every setting behind the Table~\ref{tab:scalable-benchmarks} result in the main text. Two settings differ from EvE's neural-network default (Table~\ref{tab:mnist-hparams}): population initialization, chosen by the informal spot check described below, and the Adam-fallback learning rate and step count ($0.05$ and $7$), which are the defaults of an earlier flat-vector implementation of EvE rather than values swept on this suite. The Adam baseline uses a single fixed learning rate of $0.1$ for every problem and dimension, chosen once and not tuned. Neither method's learning rate was tuned per problem, and we did not run an Adam learning-rate sweep on this suite, so an individual cell could change under a different Adam rate. One deviation from the textbook problem definitions: Sphere is recentered to $\sum_i(x_i-0.5)^2$ on $[0,1]^n$ rather than $\sum_i x_i^2$ on a domain centered at the origin, so its optimum sits at the interior midpoint of the search box used here instead of on its boundary; this changes where the optimum is, not the problem's difficulty (still the one convex, unimodal problem in the suite), and both methods are swept over the identical recentered objective.

\begin{table}[h]
\centering
\caption{Shared parameters for the synthetic scalable-benchmark sweep (Table~\ref{tab:scalable-benchmarks}).}
\label{tab:scalable-hparams}
\begin{tabular}{ll}
\hline
Setting & Value \\
\hline
Evaluation-cost budget $B$ & $2{,}000$ (Section~\ref{sec:setup-exp} accounting) \\
Problem dimensions $n$ & $\{2,10,20,50,100,200,500,1{,}000,10^5,10^6\}$ \\
Seeds & $\{0,\dots,20\}$ (21 independent runs per cell) \\
EvE population size $n_{\text{pop}}$ & $4$ (fixed, Section~\ref{sec:setup}) \\
Population initialization & Latin Hypercube Sampling (see below), not proximity (Section~\ref{sec:init}) \\
DE weight $F$ & $0.5$, jitter $\gamma=10^{-4}$ (Section~\ref{sec:de}) \\
EvE polynomial mutation & on; $\eta=20$, rate $p=1/n$ ($k=1$ gene), accept/reject gate (Appendix~\ref{app:pm}) \\
EvE Adam-fallback learning rate & $0.05$ \\
EvE Adam-fallback steps $n_{\text{adam}}$ & $7$ \\
EvE Adam-fallback weight decay & $10^{-4}$ (L2, Section~\ref{sec:adam-fallback}) \\
Adam baseline learning rate & $0.1$, fixed for every problem and dimension (not tuned) \\
Significance test & one-sided Wilcoxon signed-rank per cell, paired by seed \\
Multiple-comparison correction & Holm-Bonferroni, family-wise $\alpha=0.05$, across all $140$ tests \\
\hline
\end{tabular}
\end{table}

\paragraph{Latin Hypercube population initialization.} Section~\ref{sec:init}'s proximity initialization is tuned for neural-network training, where the population should start as a tight cluster the Adam fallback can immediately refine. On these low-dimensional, closed-form benchmarks we instead initialize the population with Latin Hypercube Sampling (LHS): each coordinate's range is divided into $n_{\text{pop}}$ equal strata, one point is drawn per stratum, and strata are independently permuted per coordinate, so the four initial points are spread across the domain rather than clustered around one anchor. In an informal spot check at matched budget on a small subset of (problem, $n$) cells, LHS initialization outperformed both proximity initialization and a fully independent uniform draw on most of the cells tried, so it is the default used throughout Table~\ref{tab:scalable-benchmarks}; we did not run this comparison at the full scale of Table~\ref{tab:scalable-benchmarks} itself, so it should be read as the reason for our choice of default, not as an independently validated result. Proximity initialization remains the default for neural network training (Section~\ref{sec:mnist}), where it was separately validated.

\paragraph{Per-problem analysis.} Ackley, Griewank, and Rastrigin are the three most explicitly multimodal problems swept, and EvE wins or ties every cell on all three, most decisively at $n\ge10^5$: on Ackley at $n=10^6$, EvE reaches a median of $6.68$ against Adam's $19.6$, and on Griewank at the same scale, $3.71\times10^6$ against $1.60\times10^7$. Both are consistent with the population/mutation machinery finding escape routes a single gradient trajectory does not, and with Adam's own gradient becoming less informative as $n$ grows on these landscapes (Section~\ref{sec:benchmarks}). Rosenbrock is the one problem where EvE's advantage all but disappears: every cell is statistically indistinguishable, reflecting its smooth, curved-valley structure, where a directed gradient step is already close to the best available move and an undirected mutation has little to add. Schwefel is genuinely mixed: EvE wins only at $n=2$, ties at $n=10$ and $20$, and Adam pulls ahead from $n=50$ upward; this problem's global optimum, sitting far from the search origin, appears to reward Adam's directed descent more than population diversity once the search space is large enough for that distance to matter. Sphere, the one unimodal, convex problem in the suite, is where Adam is most competitive: Adam is significantly better at every $n$ (at $n\le10$ both reach numerical zero, with Adam exactly zero), exactly the regime where a pure gradient method should have no weakness to exploit. Zakharov is the most extreme case at large $n$: Adam's own iterate diverges numerically at $n\ge10^5$ (median losses of $10^{29}$ and $10^{36}$), while EvE's greedy, bounded population never leaves a finite, bounded region, so its advantage there reflects avoiding a numerical failure mode as much as finding a better optimum.

\begin{table}[h]
\centering
\caption{Per-problem summary of EvE vs.\ Adam on the synthetic benchmarks: over the ten dimensions swept, the number of (problem, $n$) cells where EvE is significantly better, where the two are statistically indistinguishable, and where Adam is significantly better (one-sided Wilcoxon signed-rank, paired by seed, Holm-Bonferroni corrected, family-wise $\alpha=0.05$; $21$ seeds per cell). The full min/median/max at every cell is Table~\ref{tab:scalable-benchmarks} in the main text.}
\label{tab:scalable-summary}
\begin{tabular}{lcccl}
\hline
Problem & EvE better & Indistinguishable & Adam better & Where Adam wins \\
\hline
Ackley & 7 & 3 & 0 & -- \\
Griewank & 10 & 0 & 0 & -- \\
Rastrigin & 7 & 3 & 0 & -- \\
Rosenbrock & 0 & 10 & 0 & -- \\
Schwefel & 1 & 2 & 7 & $n\ge50$ \\
Sphere & 0 & 0 & 10 & all $n$ \\
Zakharov & 8 & 2 & 0 & -- \\
\hline
All problems & 33 & 20 & 17 & \\
\hline
\end{tabular}
\end{table}

\section{MLP/MNIST: Architecture and Hyperparameters}
\label{app:mnist}

\paragraph{Architecture.} A plain fully connected network, no convolutions: a $784$-unit input (one $28\times28$ MNIST image, flattened, pixel values as-is), two hidden layers of $1{,}024$ units each with ReLU activations, and a $10$-unit output layer scored with cross-entropy. Table~\ref{tab:mnist-arch} gives the per-layer shapes; the total is $1{,}863{,}690$ trainable weights. Every method (EvE and all eight baselines) trains this exact same architecture, initialized from the exact same random draw for a given seed, so architecture and initialization are never a source of difference between methods.

\begin{table}[h]
\centering
\caption{MLP architecture. $\text{fan\_in}$ is the preceding layer's width; biases inherit their weight tensor's fan-in.}
\label{tab:mnist-arch}
\begin{tabular}{lccc}
\hline
Layer & Shape & Activation & Parameters \\
\hline
Input & $784$ & -- & -- \\
Hidden 1 & $784\to1{,}024$ & ReLU & $802{,}816+1{,}024$ (weight+bias) \\
Hidden 2 & $1{,}024\to1{,}024$ & ReLU & $1{,}048{,}576+1{,}024$ \\
Output & $1{,}024\to10$ & -- & $10{,}240+10$ \\
\hline
Total & & & $1{,}863{,}690$ \\
\hline
\end{tabular}
\end{table}

\paragraph{Initialization and bounds.} Every weight tensor is initialized, and every method's search/step bounds are set, from the same Kaiming-derived bound $b=\sqrt{6/\text{fan\_in}}$ per tensor (a bias inherits its own weight tensor's fan-in), the same convention Section~\ref{sec:setup} uses by default for any network EvE is not told an explicit domain for. This keeps initialization and any bound-dependent clamping identical across EvE and every baseline.

\paragraph{Data.} MNIST: $60{,}000$ training images, $10{,}000$ held-out test images, no augmentation. The training set is further split $90\%/10\%$ into a training and a validation set with a fixed split seed ($42$), shared identically by every method. Minibatch size is $256$; each reported test accuracy is the accuracy, on the full $10{,}000$-image test set, of the weight snapshot that reached the lowest validation loss during that run (not simply the final weights), so early stopping does not favor any one method.

\paragraph{Hyperparameters and protocol.} Table~\ref{tab:mnist-hparams} lists the shared sweep. Every method is run at every (learning rate, seed) pair, and Section~\ref{sec:mnist}'s numbers are reported at each method's own best learning rate (highest mean test accuracy across seeds), never a single shared learning rate. All methods are stopped at the same evaluation-cost budget $B=3{,}000$ using the accounting of Section~\ref{sec:setup-exp}, so the comparison is at matched charged work, not matched step count.

\begin{table}[h]
\centering
\caption{Shared MLP/MNIST training protocol.}
\label{tab:mnist-hparams}
\begin{tabular}{ll}
\hline
Setting & Value \\
\hline
Evaluation-cost budget $B$ & $3{,}000$ (Section~\ref{sec:setup-exp} accounting) \\
Learning-rate grid & $\{10^{-4},3\times10^{-4},10^{-3},3\times10^{-3},10^{-2}\}$ \\
Seeds & $\{0,1,2,3,4\}$ (5 independent runs per cell) \\
Minibatch size & $256$ \\
Weight decay & $5\times10^{-4}$ (decoupled for AdamW; L2 for SGD+momentum and EvE's Adam fallback) \\
SGD momentum & $0.9$ \\
EvE population size $n_{\text{pop}}$ & $4$ (fixed, Section~\ref{sec:setup}) \\
EvE Adam-fallback steps $n_{\text{adam}}$ & $3$ (default, Section~\ref{sec:adam-fallback}) \\
EvE polynomial mutation & off (Section~\ref{sec:mutation}) \\
\hline
\end{tabular}
\end{table}

\paragraph{Baselines.} Adam, AdamW, NAdam, RAdam, and AMSGrad \citep{reddi2018amsgrad} (Adam with its AMSGrad variant enabled) are standard optimizer implementations; AdaBelief \citep{zhuang2020adabelief} and Lookahead(Adam) \citep{zhang2019lookahead} come from a widely used third-party optimizer collection; SGD uses (Nesterov-free) momentum $0.9$. None of the eight baselines receives any tuning beyond the shared learning-rate grid above: no baseline-specific learning-rate schedule, warmup, or extra regularization.

\paragraph{Detailed discussion of Figure~\ref{fig:mnist}.} Panel (a) shows the same training runs from two angles. Its left pane plots training loss against the evaluation-cost budget both methods are held to equally; here EvE tracks in the middle of the pack, a bit behind the Adam family but ahead of plain SGD. Its right pane plots the identical runs against wall-clock time instead, and this is where the difference shows up: EvE has already used its whole budget and stopped by around 49 seconds, while every baseline is still going at 182 to 192 seconds to spend that same budget, a roughly $3.7$--$3.9\times$ difference in wall-clock time for identical charged work. Panel (c) shows this same time-to-exhaust-budget comparison as a bar chart. Panel (d) shows what each method's own best learning rate actually reaches: EvE lands around 97.0\% test accuracy, close to plain SGD with momentum, while the Adam-family methods cluster tighter around 98.0-98.2\%, a gap of roughly one accuracy point. Panel (b) shows how sensitive each method is to the learning-rate choice; EvE's curve sits between SGD's steep dependence and the Adam family's flat, forgiving one.

\section{UCI Adult: Hyperparameter and Architecture Search}
\label{app:hpo}

\paragraph{Successive halving.} Both sweeps use synchronous successive halving (SHA) \citep{jamieson2016non}: $N_{\text{configs}}=48$ configurations are sampled once (log-uniform learning rate; log-uniform weight decay for the hyperparameter sweep, or discrete width/depth choices for the architecture sweep), then trained in rungs at evaluation-cost budgets $100, 300, 900, 2{,}700, 3{,}000$ (reduction factor $\eta=3$, matching $B=3{,}000$ used nowhere else in this paper but chosen for the same reason MNIST used $B=3{,}000$: enough budget for a real MLP to separate good configurations from bad ones). At each rung boundary only the top third of surviving configurations (by validation accuracy so far) is promoted to the next, larger budget; eliminated configurations are never trained further. Every method's training is \emph{resumed}, not restarted, across rungs -- the same model, optimizer, and budget tracker objects persist, with the tracker's budget simply raised before re-entering the training loop -- so a promoted configuration's reported wall-clock time is the honest sum of the budget it actually consumed, never double-counted. We label this SHA rather than ASHA (asynchronous SHA): the asynchronous/synchronous distinction only matters with multiple parallel workers, and everything here runs sequentially on one GPU, where the two are identical in effect.

\paragraph{Dataset.} UCI Adult Income \citep{adult1996}: $32{,}561$ training rows, $16{,}281$ held-out test rows (the dataset's own fixed split), binary target (annual income above/below \$50K). The \texttt{fnlwgt} column (a census sampling weight, not predictive of income) is dropped, standard practice for this dataset. Categorical columns are one-hot encoded (missing values marked ``?'' kept as their own category, not dropped, so train and test rows stay aligned); numeric columns are standardized using training-set statistics only, applied unchanged to validation and test, to avoid leakage. This yields $107$ input features. A fixed $10\%$ validation carve-out (seed $42$, same convention as every other experiment in this paper) is taken from the training rows.

\paragraph{Why UCI Adult?} Unlike MNIST or the LoRA tasks, this is chosen specifically as a widely-cited, standard AutoML/HPO benchmark (used throughout Auto-sklearn, AutoGluon, FT-Transformer, and most of the HPO literature), not a dataset constructed to favor EvE, and tabular classification is one of the more hyperparameter-sensitive corners of the field (default hyperparameters are famously mediocre without tuning), making ``does a faster inner optimizer speed up the search'' a meaningful question here.

\subsection{Hyperparameter Search: Learning Rate and Weight Decay}
\label{app:hpo-lr}

Architecture is fixed (a $2$-hidden-layer, $256$-unit MLP); the search is over learning rate (log-uniform, $[10^{-4},10^{-2}]$) and weight decay (log-uniform, $[10^{-6},10^{-2}]$), identical configurations for every method (fixed seed $0$ for config sampling). Table~\ref{tab:hpo-full} reports, for each method, the total wall-clock time to complete its own 5-seed SHA sweep and the best validation accuracy found, both as mean $\pm$ std over the 5 seeds.

\begin{table}[h]
\centering
\caption{UCI Adult, learning-rate/weight-decay SHA sweep, mean $\pm$ std over 5 seeds, sorted by wall-clock time.}
\label{tab:hpo-full}
\begin{tabular}{l c c}
\hline
Method & Time (s) & Best Val Acc \\
\hline
\textbf{EvE} & 21.8 $\pm$ 0.3 & 0.8544 $\pm$ 0.0018 \\
SGD+mom. & 73.6 $\pm$ 1.3 & 0.8549 $\pm$ 0.0020 \\
NAdam & 75.1 $\pm$ 0.5 & 0.8591 $\pm$ 0.0018 \\
RAdam & 75.5 $\pm$ 0.9 & 0.8584 $\pm$ 0.0009 \\
AdaBelief & 75.6 $\pm$ 0.8 & 0.8579 $\pm$ 0.0010 \\
Adam & 75.7 $\pm$ 0.9 & 0.8582 $\pm$ 0.0019 \\
AdamW & 75.7 $\pm$ 0.6 & 0.8587 $\pm$ 0.0010 \\
AMSGrad & 76.2 $\pm$ 0.4 & 0.8580 $\pm$ 0.0010 \\
Lookahead & 76.8 $\pm$ 1.0 & 0.8587 $\pm$ 0.0019 \\
\hline
\end{tabular}

\end{table}

EvE completes the sweep in $21.8\pm0.3$s against $73.6$--$76.8$s for the eight baselines ($3.4$--$3.5\times$ faster), reaching a best validation accuracy of $0.8544\pm0.0018$ against the baselines' $0.8549$--$0.8591$ (a gap of $0.05$--$0.47$ points, smaller than the gap reported for MNIST). EvE was faster than every one of the eight baselines on all $5$ seeds, a fully consistent trend; we do not report a formal significance test on it, since at $n=5$ seeds even a perfectly consistent difference sits at the mechanical floor of what a Wilcoxon signed-rank test can express and would not survive the Holm-Bonferroni correction used for Table~\ref{tab:scalable-benchmarks}'s $21$-seed comparisons, so a $p$-value here would overstate the evidence rather than clarify it.

\subsection{Architecture Search: Width and Depth}
\label{app:hpo-nas}

The search space here is genuinely architectural: hidden width $H\in\{64,128,256,512\}$ and depth $\in\{1,2\}$ hidden layers, with learning rate swept alongside (same range as Appendix~\ref{app:hpo-lr}) but weight decay held fixed at $5\times10^{-4}$, so the experiment isolates architecture choice rather than repeating the hyperparameter sweep with extra noise. Each SHA configuration therefore trains a genuinely different-shaped model, not just a different-hyperparameter run of a fixed one.

\begin{table}[h]
\centering
\caption{UCI Adult, architecture (width/depth) SHA sweep, mean $\pm$ std over 5 seeds, sorted by wall-clock time.}
\label{tab:nas-full}
\begin{tabular}{l c c}
\hline
Method & Time (s) & Best Val Acc \\
\hline
\textbf{EvE} & 21.5 $\pm$ 0.9 & 0.8544 $\pm$ 0.0010 \\
AdaBelief & 66.9 $\pm$ 1.2 & 0.8587 $\pm$ 0.0017 \\
Lookahead & 68.2 $\pm$ 1.1 & 0.8588 $\pm$ 0.0006 \\
SGD+mom. & 69.6 $\pm$ 0.9 & 0.8533 $\pm$ 0.0014 \\
AMSGrad & 69.8 $\pm$ 2.1 & 0.8588 $\pm$ 0.0017 \\
RAdam & 70.5 $\pm$ 1.7 & 0.8585 $\pm$ 0.0012 \\
NAdam & 70.6 $\pm$ 1.3 & 0.8582 $\pm$ 0.0013 \\
AdamW & 71.8 $\pm$ 1.5 & 0.8589 $\pm$ 0.0008 \\
Adam & 75.8 $\pm$ 3.7 & 0.8585 $\pm$ 0.0013 \\
\hline
\end{tabular}

\end{table}

EvE completes the architecture sweep in $21.5\pm0.9$s against $66.9$--$75.8$s ($3.1$--$3.5\times$ faster), reaching $0.8544\pm0.0010$ against the baselines' $0.8533$--$0.8589$; EvE beats sgd\_momentum's best-found accuracy outright (by $0.11$ points) and otherwise trails the remaining seven baselines by $0.38$--$0.45$ points, the same modest-quality-for-speed pattern as the rest of this paper. The same caveat from Appendix~\ref{app:hpo-lr} applies here: EvE was faster on every seed against every baseline, but we do not attach a $p$-value to that at this seed count for the same reason given there.

One genuinely new finding this search space can show that a hyperparameter-only search cannot: which architecture each method actually selected. Width $H=512$ was the single most frequently selected choice across all 9 methods' 5 seeds (chosen in $30$ of the $45$ winning configurations), and depth $2$ in $30$ of $45$; restricted to the eight baselines alone the preference is even stronger ($28$ of $40$ for both $H=512$ and depth $2$), a clear majority converging on the largest, deepest network in the search space being the best one found. EvE's own 5 seeds diverge from that majority: $H=512$ in only 2 of 5 runs ($H=256$ in the other 3) and depth $2$ in only 2 of 5 (depth $1$ in the other 3) -- despite this, EvE's accuracy remains close to the baseline cluster's (Table~\ref{tab:nas-full}). So unlike what an earlier, leakage-affected run of this sweep suggested, EvE does not simply rediscover the same architecture the gradient-only baselines converge on; it settles on a smaller, shallower network more often than not, and reaches comparable accuracy anyway, a genuine (if modest) difference in what the search converges on, not only in how fast it gets there.

\subsection{Does EvE Rank Configurations Like Adam Does?}
\label{app:rank-agreement}

A fast proxy is only useful if it orders candidate configurations the way the trainer it stands in for would. Every method in Appendices~\ref{app:hpo-lr} and~\ref{app:hpo-nas} trains the same $48$ sampled configurations (the configuration-sampling seed is fixed), so this can be measured directly. At the first SHA rung (evaluation-cost budget $100$), we compute Kendall's $\tau$ (tie-corrected) between EvE's validation-accuracy ranking of the $48$ configurations and Adam's, pairing runs by training seed, and report mean $\pm$ std over the $5$ seeds. Two references put the values in context: the same statistic between Adam's rankings under two different training seeds (the agreement a method has with itself, set by training noise), and between Adam and AdamW (a near-identical optimizer).

\begin{table}[h]
\centering
\caption{Rank agreement (Kendall's $\tau$) at the first SHA rung over the same $48$ configurations, mean $\pm$ std over $5$ seeds.}
\label{tab:rank-agreement}
\begin{tabular}{lcc}
\hline
Comparison & Hyperparameter search & Architecture search \\
\hline
EvE vs.\ Adam & $0.663\pm0.093$ & $0.691\pm0.032$ \\
EvE vs.\ AdamW & $0.677\pm0.063$ & $0.709\pm0.041$ \\
Adam vs.\ Adam (different training seeds) & $0.694\pm0.061$ & $0.670\pm0.029$ \\
Adam vs.\ AdamW & $0.714\pm0.078$ & $0.848\pm0.018$ \\
\hline
\end{tabular}
\end{table}

EvE's ranking agrees with Adam's about as well as Adam's agrees with itself under a different training seed ($0.663$ against $0.694$ on the hyperparameter search, $0.691$ against $0.670$ on the architecture search; Table~\ref{tab:rank-agreement}), so the disagreement is comparable to training noise. EvE's own top-ranked configuration at this rung sat at rank $4.8$ of $48$ under Adam on average in the hyperparameter search (per seed: $4,5,3,10,2$) and $6.8$ in the architecture search ($5,4,3,16,6$). Three limits apply. The agreement is not perfect ($\tau<1$), so a pair of configurations can be ordered differently by EvE and Adam, at a rate comparable to Adam's own run-to-run reordering. It is measured on one dataset and at the first rung only, the early-ranking regime this paper's claim concerns, and does not test agreement at later rungs. And on the architecture search Adam agrees more closely with AdamW ($0.848$) than with EvE ($0.691$), so EvE is not as faithful to Adam as a near-identical optimizer is.

\section{LoRA/Alpaca: Architecture and Hyperparameters}
\label{app:lora}

\paragraph{Model and adapters.} Qwen2.5-1.5B-Instruct, base weights frozen, LoRA adapters (rank $32$, alpha $64$, dropout $0$) on the query, value, and output projections plus the MLP, matching torchtune's own \texttt{qwen2\_5/1.5B\_lora\_single\_device} reference recipe for this model. Only the LoRA adapter parameters are ever trainable or optimized by any method; the base model is loaded once per run in bfloat16, chosen to keep a 1.5B-parameter model's memory footprint and per-step cost practical across the full nine-method, five-learning-rate, five-seed sweep on our hardware.

\paragraph{Data.} The Alpaca instruction dataset (\texttt{tatsu-lab/alpaca}), tokenized lazily per example via the model's own tokenizer, sequences capped at $512$ tokens (the original Alpaca fine-tuning recipe's own limit, and a bound on worst-case per-step cost under common-random-numbers batch reuse, Section~\ref{sec:selection}). Alpaca ships no canonical val/test split, so $5\%$/$5\%$ are carved out of the full set for validation/test (fixed split seed, shared identically by every method), leaving $90\%$ for training. Minibatch size is $2$ (this model's own reference training config). Each reported test loss is the loss, on a fixed $64$-example test sample, of the checkpoint that reached the lowest loss on a fixed $16$-example validation probe during that run.

\paragraph{Hyperparameters and protocol.} Table~\ref{tab:lora-hparams} lists the shared sweep for this section (Alpaca); Appendix~\ref{app:gsm8k} lists the one difference for GSM8K, its $3\times$ larger evaluation-cost budget. Every method is run at every (learning rate, seed) pair, and Section~\ref{sec:lora}'s numbers are reported at each method's own best learning rate, never a single shared one. The evaluation-cost budget is scaled down from MNIST's $B=3{,}000$ (Table~\ref{tab:mnist-hparams}) to $B=300$: a real forward+backward on this model costs on the order of seconds, not the low milliseconds of an MLP forward pass, so the larger budget would take weeks rather than hours across the full sweep; the accounting itself (Section~\ref{sec:setup-exp}) is unchanged.

\begin{table}[h]
\centering
\caption{Shared LoRA/Alpaca training protocol ($B=300$; GSM8K uses the same protocol except $B=900$, Appendix~\ref{app:gsm8k}).}
\label{tab:lora-hparams}
\begin{tabular}{ll}
\hline
Setting & Value \\
\hline
Evaluation-cost budget $B$ & $300$ (Section~\ref{sec:setup-exp} accounting) \\
Learning-rate grid & $\{10^{-4},3\times10^{-4},10^{-3},3\times10^{-3},10^{-2}\}$ \\
Seeds & $\{0,1,2,3,4\}$ (5 independent runs per cell) \\
Minibatch size & $2$ \\
Max sequence length & $512$ tokens \\
LoRA rank / alpha & $32$ / $64$ \\
LoRA target modules & query, value, output projections, MLP \\
Precision & bfloat16 (base + adapter) \\
Weight decay & $5\times10^{-4}$ (decoupled for AdamW; L2 for SGD+momentum and EvE's Adam fallback) \\
SGD momentum & $0.9$ \\
EvE population size $n_{\text{pop}}$ & $4$ (fixed, Section~\ref{sec:setup}) \\
EvE Adam-fallback steps $n_{\text{adam}}$ & $3$ (default, Section~\ref{sec:adam-fallback}) \\
EvE population initialization & anchor = LoRA's own zero-initialized adapter, not random (see below) \\
EvE polynomial mutation & off (Section~\ref{sec:mutation}) \\
\hline
\end{tabular}
\end{table}

\paragraph{Why EvE's population is initialized differently here.} LoRA's $B$ matrix is zero-initialized by convention, so a fresh adapter is an exact no-op on the frozen base model: at the very start of training, every method (including every baseline) sees the same loss the base model alone would produce. Section~\ref{sec:init}'s random anchor draw would overwrite that zero-init with noise before any training happens, immediately raising the starting loss far above every baseline's true starting point (measured: validation/test loss jumps to $\sim11$, near $\log(\text{vocab size})$, i.e.\ close to a fully untrained model). EvE's population is instead anchored at the adapter's own real starting values, with the other three population members still placed by the usual proximity jitter around that anchor (Section~\ref{sec:init}); this is the only initialization change made for this experiment, needed for a fair starting point rather than a tuning choice.

\paragraph{Detailed discussion of Figure~\ref{fig:lora}.} Panels (a) and (b) show the same training runs against the two axes. Against the shared evaluation-cost budget (a), EvE tracks below the Adam family and close to SGD, the same relative position it holds on MNIST (Figure~\ref{fig:mnist}a). Against wall-clock time (b), EvE has already exhausted its budget and stopped well before any baseline is halfway through, which panel (d) reports directly: $35.0\pm0.6$s for EvE against $58.8$--$70.7$s for the baselines, a $1.7$--$2.0\times$ wall-clock advantage for identical charged work. Panel (c) shows each method's learning-rate sensitivity on final test loss: EvE's best learning rate ($3\times10^{-4}$) is an order of magnitude higher than most of the Adam family's ($10^{-4}$; RAdam and Lookahead-Adam match EvE at $3\times10^{-4}$), consistent with most of its steps being DE moves rather than gradient steps, so its occasional Adam-fallback steps need a larger nominal rate. Panel (e) shows the final test-loss picture: EvE's median sits above the tight Adam-family cluster ($1.240$ versus $1.115$--$1.124$, roughly $11\%$ higher), and its min--max range is wider than any baseline's, the run-to-run variance a four-member population carries relative to a single gradient trajectory (Section~\ref{sec:limitations}).

\section{GSM8K: Dataset and Detailed Analysis}
\label{app:gsm8k}

\subsection{Same-Task Fine-Tuning: LoRA on GSM8K}
\label{sec:gsm8k}

Section~\ref{sec:lora} trains on Alpaca and reports loss on its own held-out split, which only asks whether an optimizer fits its training distribution, not whether the resulting model is any good at a task someone cares about. We therefore fine-tune directly on GSM8K \citep{cobbe2021training}, raising the evaluation-cost budget to $900$ ($3\times$ Alpaca's $300$).

EvE shows the same speed/quality trade-off here as on Alpaca (Section~\ref{sec:lora}), now at three times the budget and on a genuinely different training target (multi-step reasoning chains rather than short instruction-following answers), detailed panel-by-panel below alongside a downstream accuracy check on the full test set (Table~\ref{tab:gsm8k-full}).

\begin{figure}[t]
\centering
\scalebox{0.65}{%
\begin{minipage}{\linewidth}
\begin{subfigure}[b]{0.48\linewidth}
\includegraphics[width=\linewidth]{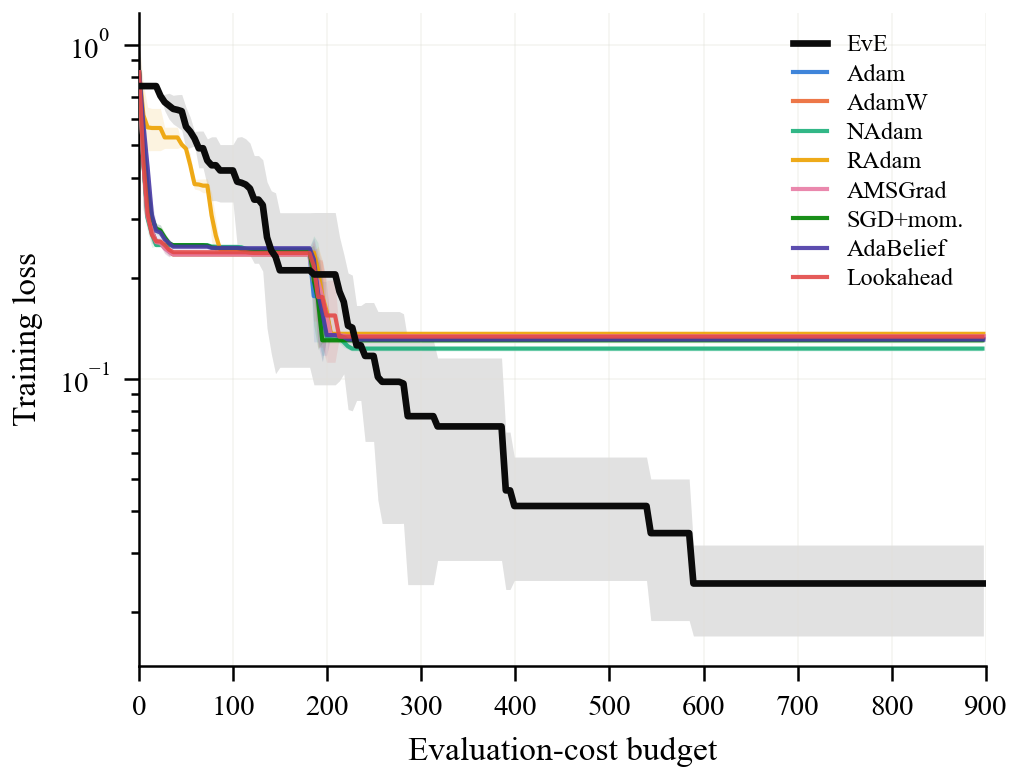}
\caption{Training loss vs.\ evaluation-cost budget.}
\end{subfigure}\hfill
\begin{subfigure}[b]{0.48\linewidth}
\includegraphics[width=\linewidth]{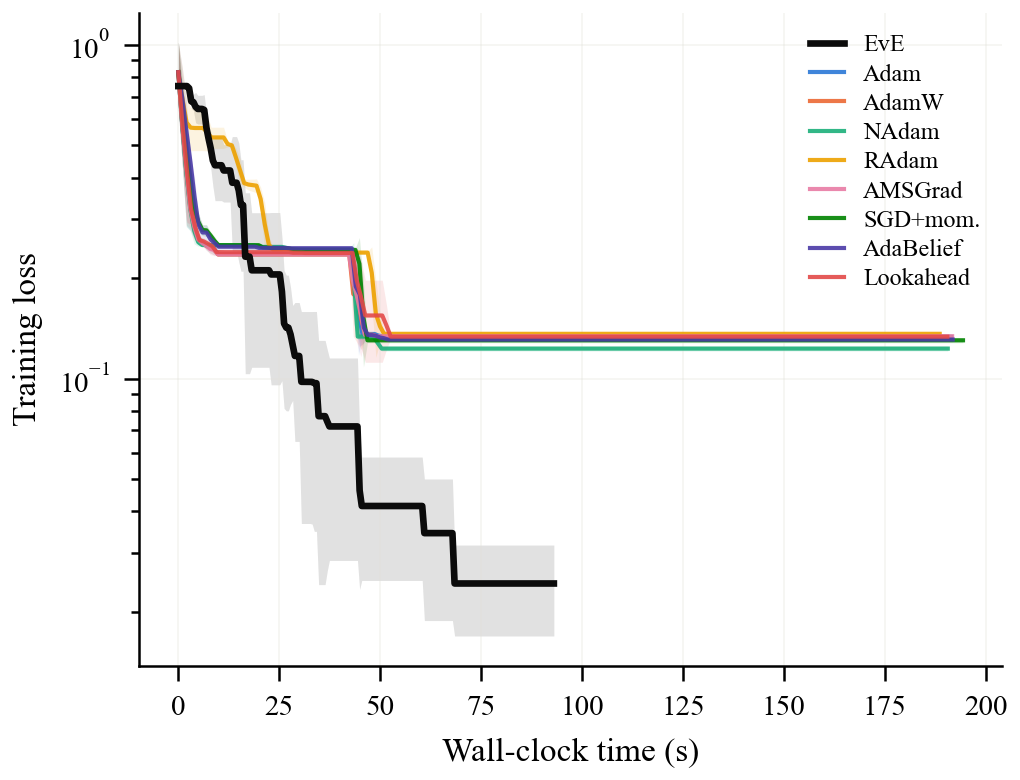}
\caption{Training loss vs.\ wall-clock time.}
\end{subfigure}\\[2pt]
\begin{subfigure}[b]{0.32\linewidth}
\includegraphics[width=\linewidth]{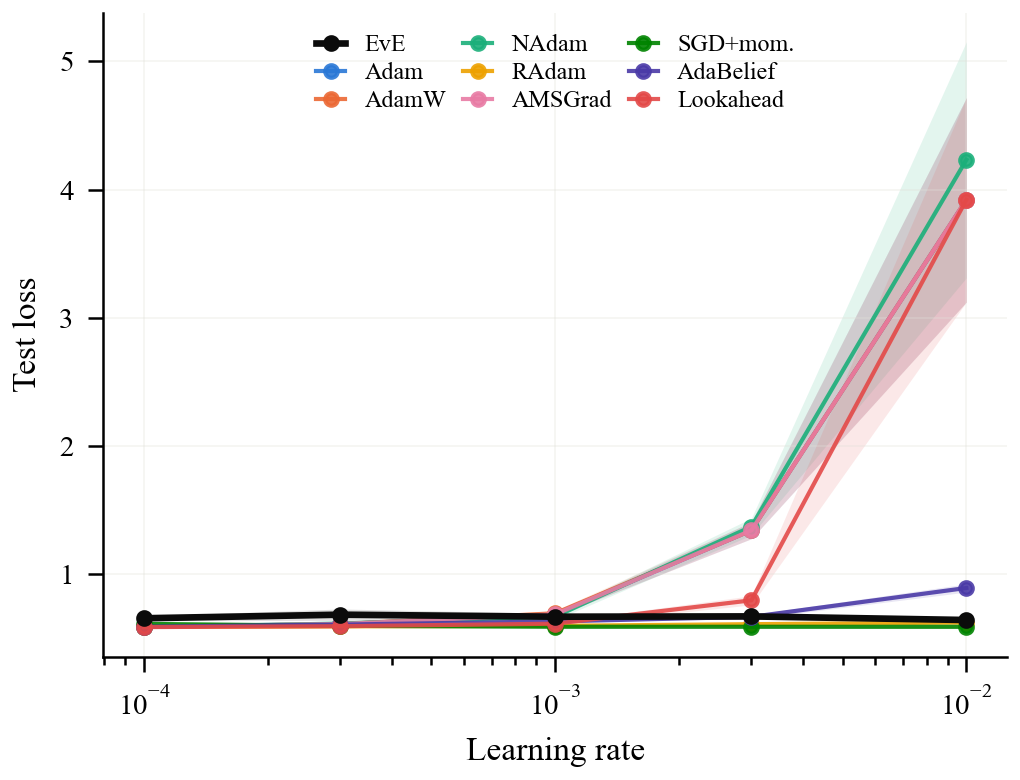}
\caption{Test loss vs.\ learning rate.}
\end{subfigure}\hfill
\begin{subfigure}[b]{0.32\linewidth}
\includegraphics[width=\linewidth]{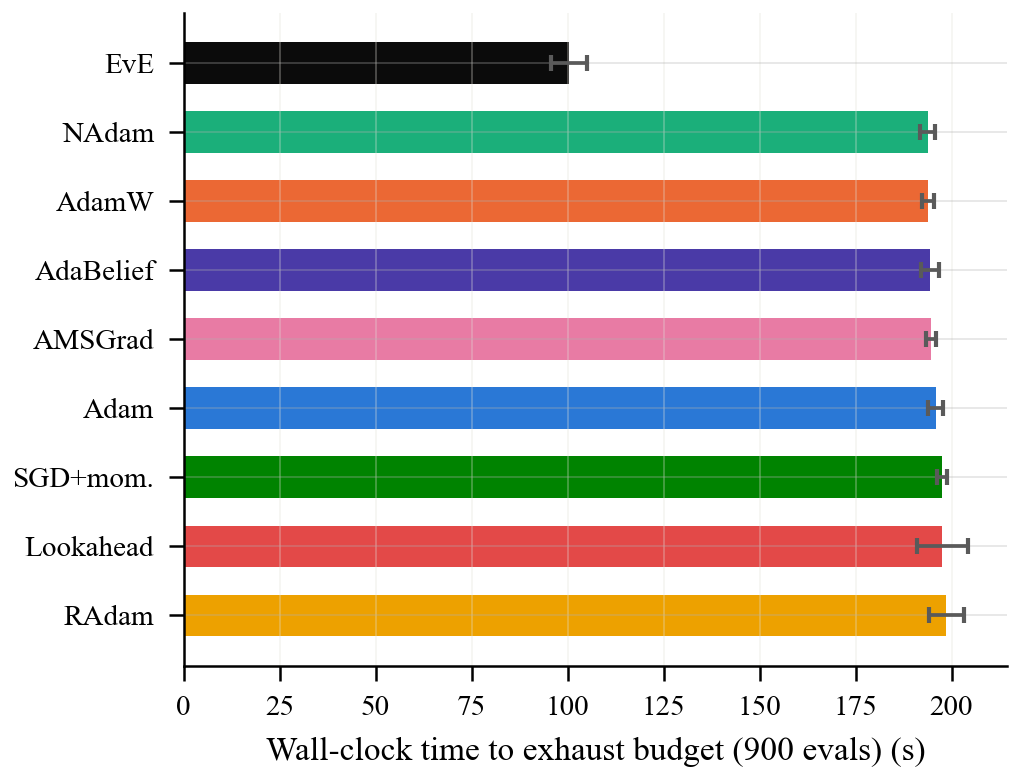}
\caption{Wall-clock time to exhaust the shared budget.}
\end{subfigure}\hfill
\begin{subfigure}[b]{0.32\linewidth}
\includegraphics[width=\linewidth]{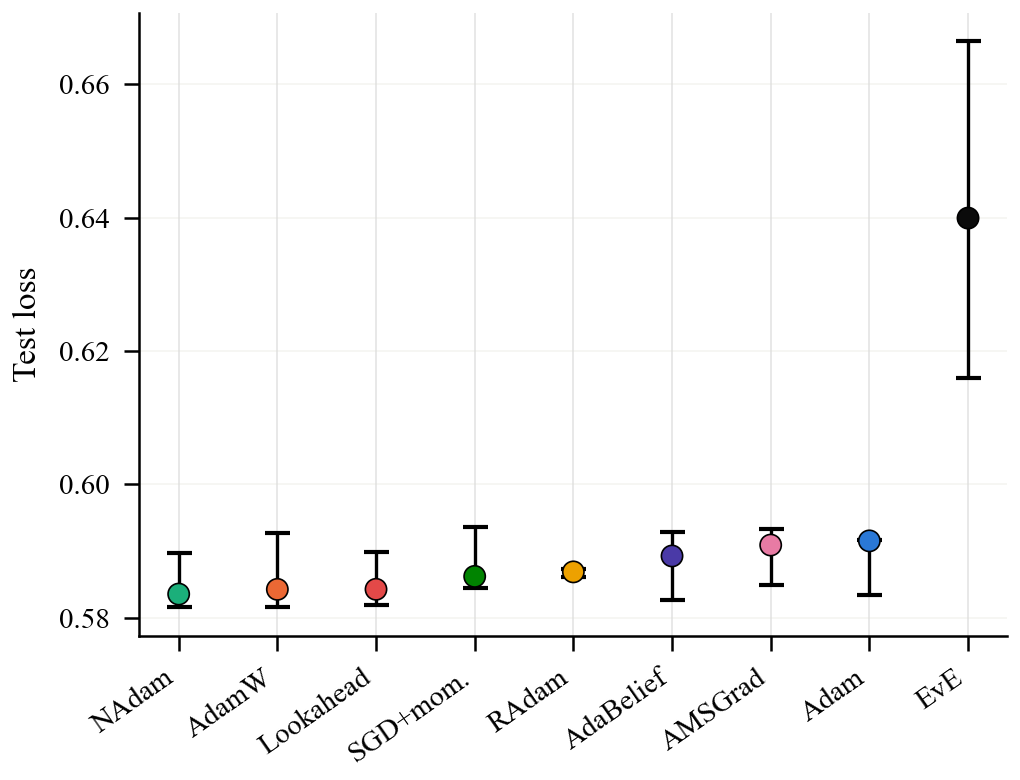}
\caption{Best-LR test loss (median, min-max over 5 seeds).}
\end{subfigure}
\end{minipage}}
\caption{EvE vs.\ eight baselines fine-tuning Qwen2.5-1.5B-Instruct with LoRA directly on GSM8K, under a shared evaluation-cost budget of $900$ ($3\times$ Figure~\ref{fig:lora}'s Alpaca budget), at each method's own best learning rate over 5 seeds. (a)/(b) training runs vs.\ budget and vs.\ wall-clock time. (c) learning-rate sensitivity. (d) wall-clock cost of the shared budget, sorted. (e) final test loss reached (lower is better).}
\label{fig:gsm8k}
\end{figure}

\paragraph{Why train directly on GSM8K?} Section~\ref{sec:lora}'s Alpaca fine-tuning and Appendix~\ref{app:lora}'s protocol only measure whether an optimizer fits its own training distribution; neither says anything about a downstream task an Alpaca-tuned model was never asked to solve. Testing such a model zero-shot on GSM8K \citep{cobbe2021training} would conflate two different questions into one number, with no way to tell which one moved: whether the optimizer learned the fine-tuning task well, and whether fine-tuning damaged some unrelated capability the base model already had (catastrophic forgetting). Training and evaluating on the same task removes that confound: any gap between methods here reflects how well each optimizer fits GSM8K itself, nothing else.

\paragraph{Data and masking.} GSM8K ships its own train ($7{,}473$ examples) and test ($1{,}319$ examples) splits; unlike Alpaca (Appendix~\ref{app:lora}), no test carve-out is needed, only a validation split out of train (same $\text{VAL\_FRACTION}$/split-seed convention as every other experiment here). Each training target matches the exact prompt format used to build the answer: solve step by step, ending with the final number after \texttt{"\#\#\#\# "}. GSM8K's own answer field already is step-by-step reasoning in this form, with only its \texttt{<<calculator annotations>>} stripped (the evaluation prompt was never asked to produce those, so training on them would be a train/eval mismatch for no benefit). Standard SFT masking applies: the question (user turn) is excluded from the loss, the answer (assistant turn) is trained on, verified directly against the tokenizer's own masked-flag convention rather than assumed. Model, LoRA configuration, precision, sequence-length cap, learning-rate grid, seeds, and EvE's population initialization (Table~\ref{tab:lora-hparams}'s anchor-at-the-adapter's-own-zero-init, Appendix~\ref{app:lora}, since the LoRA architecture's zero-initialized $B$ matrix is a property of the model, not of Alpaca specifically) are all identical to Section~\ref{sec:lora}; the dataset differs, and the evaluation-cost budget is raised to $900$ ($3\times$ Section~\ref{sec:lora}'s $300$), so that fine-tuning has enough room to say something about downstream reasoning capability rather than only about convergence speed at a matched, short budget. This is a deliberate, stated departure from the matched-budget protocol used for every other comparison in this paper (Section~\ref{sec:setup-exp}); it means Figure~\ref{fig:gsm8k}'s wall-clock numbers are not directly comparable to Figure~\ref{fig:lora}'s in absolute terms, only in their relative (speedup/quality-gap) pattern.

\paragraph{Detailed discussion of Figure~\ref{fig:gsm8k}.} The overall pattern matches Figure~\ref{fig:lora}, at three times the per-method budget. Panels (a) and (b) again show EvE below the Adam family against the budget, with panel (d) converting the wall-clock cost of that larger budget into seconds: $100.4\pm4.7$s for EvE against $193.8$--$198.7$s for the baselines, a $1.9$--$2.0\times$ advantage, essentially the same ratio as Section~\ref{sec:lora}'s despite three times the work. Panel (c) shows EvE's best learning rate by test loss ($10^{-2}$) is two orders of magnitude above seven of the eight baselines' ($10^{-4}$); chosen on validation loss instead, EvE's rate is $10^{-4}$, like theirs (Appendix~\ref{app:lr-selection}). At this larger budget, SGD with momentum's own best rate also moves up to $10^{-2}$, matching EvE rather than sitting an order below it as at the smaller Alpaca/GSM8K budgets, consistent with more evaluations giving every method's occasional larger step more room to pay off before the budget runs out. Panel (e) shows the final test-loss picture: EvE's median sits above the baselines' tight cluster ($0.640$ versus $0.585$--$0.590$, roughly $9\%$ higher, a smaller relative gap than Section~\ref{sec:lora}'s $11\%$; this smaller gap is not robust to the selection rule, since choosing EvE's rate on validation loss gives $0.654$, about $11\%$ higher, Appendix~\ref{app:lr-selection}), with a wider min--max range than any baseline (test-loss standard deviation $0.016$ against every baseline's $\le0.005$), the same run-to-run variance pattern as everywhere else in this paper (Section~\ref{sec:limitations}) though less extreme in relative terms than at the smaller budget.

\paragraph{Downstream reasoning accuracy.} Test loss (Figure~\ref{fig:gsm8k}e) measures whether a method fits GSM8K's own answer distribution; it does not by itself confirm that a lower-loss checkpoint solves more problems correctly. We therefore also scored downstream exact-match accuracy, with greedy (temperature-$0$) decoding and at most $512$ generated tokens, on GSM8K's $1{,}319$-question test split. This check has a practical purpose beyond ranking optimizers: it asks whether this fine-tuning recipe is worth running at all, and a practitioner can answer that with EvE in about half the wall-clock time Adam needs ($100.4\pm4.7$s against $193.8$--$198.7$s per run at $900$ evaluations, Figure~\ref{fig:gsm8k}d), instead of spending twice the compute on Adam to reach the same conclusion. The same holds at every longer budget we ran (Table~\ref{tab:gsm8k-full}), so the saving comes from EvE's cheaper charged evaluations, not from any one budget. This is the sense in which we call EvE a proxy: the signal a search loop needs arrives sooner, while the final accuracy of an EvE-trained checkpoint is lower than Adam's.

An initial probe on a fixed $100$-question subset of the test split showed every fine-tuned checkpoint scoring below the untouched base model. To test whether a longer training exposure would change that, we extended each method's best configuration (one learning rate and seed per method, the same at every budget) to $10{,}000$ and then $30{,}000$ evaluations and re-scored on the same subset; longer training did not restore accuracy, which motivated scoring the whole test split. The probe cost minutes per checkpoint and the full evaluation roughly a day for all checkpoints, so the cheap probe decided whether the expensive evaluation was warranted. We report two scorings, both with no hand correction: strict, where only the model's own final \texttt{"\#\#\#\# N"} answer counts, and flexible, which takes that answer if present, otherwise a boxed answer, otherwise the last number in the completion. The base model rarely uses our answer format, so strict scoring understates what it can do; flexible scoring is the fair comparison of capability, and strict is reported alongside it so the difference is visible.

On the full $1{,}319$-question test set (every checkpoint scored on the same questions; Table~\ref{tab:gsm8k-full}), the base model scores $27.9\%$ under strict scoring and $73.5\%$ under flexible scoring; the gap arises because it rarely writes the \texttt{"\#\#\#\# N"} format. The fine-tuned checkpoints do use that format, so their two scores nearly coincide. Strict scoring alone would therefore misleadingly favor the fine-tuned checkpoints over the base model, purely because of format. Under flexible scoring, the fair comparison, the base model beats every fine-tuned checkpoint by at least $14$ points (paired exact McNemar tests, all $p<10^{-4}$), so fine-tuning under this recipe lowers accuracy for both optimizers. Between the two optimizers, Adam is significantly more accurate than EvE at $900$ evaluations ($59.4\%$ against $54.4\%$, $p=0.0007$) and at $10{,}000$ ($58.2\%$ against $53.7\%$, $p=0.002$), while at $30{,}000$ the difference is not significant ($52.6\%$ against $54.4\%$, $p=0.24$), because Adam's accuracy declines as its budget grows ($59.4\%$ to $52.6\%$, $p<10^{-4}$) while EvE's does not change. EvE's $30{,}000$-evaluation adapter is identical to its $900$-evaluation adapter: training retains the checkpoint with the lowest validation-probe loss, and no later checkpoint beat one reached early in training, so EvE contributes two distinct budgets here, not three. Each budget uses a single training run per method, unlike the five-seed protocol of the rest of this paper.

\begin{table}[h]
\centering
\caption{GSM8K downstream exact-match accuracy on the full $1{,}319$-question test set, one training run per row, with the wall-clock training time of that run in seconds. The interval is the $95\%$ Wilson interval on flexible accuracy. EvE's $30{,}000$-evaluation adapter is identical to its $900$-evaluation adapter, so its accuracy is shown once, with both training times.}
\label{tab:gsm8k-full}
\begin{tabular}{lcccc}
\hline
Checkpoint & Strict (\texttt{"\#\#\#\# N"}) & Flexible & Flexible interval & Training time (s) \\
\hline
Base model (no fine-tuning) & $27.9\%$ & $73.5\%$ & $[71.1,\,75.9]$ & -- \\
EvE, $900$ / $30{,}000$ evaluations & $54.4\%$ & $54.4\%$ & $[51.7,\,57.1]$ & $105$ / $3{,}396$ \\
EvE, $10{,}000$ evaluations & $53.7\%$ & $53.7\%$ & $[51.0,\,56.4]$ & $1{,}104$ \\
Adam, $900$ evaluations & $59.4\%$ & $59.4\%$ & $[56.7,\,62.0]$ & $196$ \\
Adam, $10{,}000$ evaluations & $58.2\%$ & $58.2\%$ & $[55.5,\,60.9]$ & $2{,}093$ \\
Adam, $30{,}000$ evaluations & $52.5\%$ & $52.6\%$ & $[49.9,\,55.3]$ & $6{,}284$ \\
\hline
\end{tabular}
\end{table}

This finding does not change this paper's central claim, which rests on the loss-based, evaluation-cost-matched comparison of Figure~\ref{fig:gsm8k} and Section~\ref{sec:gsm8k}. Narrow supervised fine-tuning degrading a pretrained capability is a documented phenomenon \citep{luo2023empirical}, and because each method's ``best learning rate'' (Appendix~\ref{app:lora}) is selected by held-out test loss rather than downstream accuracy, a checkpoint can lower that token-level loss by imitating the surface form of the \texttt{"\#\#\#\# N"} answer key without improving arithmetic. This is reported because a claim this paper does not make (that GSM8K fine-tuning improves downstream accuracy for any method, including EvE) should not be left ambiguous by omission.

\end{document}